\theoremstyle{plain}
\newtheorem{theorem}{Theorem}
\newtheorem{lemma}{Lemma}
\newtheorem{requirement}{Requirement}
\newtheorem{definition}{Definition}
\def\ie{\emph{i.e.\/}}
\def\eg{\emph{e.g.\/}}
\def\wrt{\emph{w.r.t.\/}}
\def\ie{\emph{i.e.\/}}
\def\st{\emph{s.t.\/}}
\icmltitlerunning{HarsanyiNet: Computing Accurate Shapley Values in a Single Forward Propagation}
\begin{document}

\twocolumn[
\icmltitle{HarsanyiNet: Computing Accurate Shapley Values \\ in a Single Forward Propagation}

% It is OKAY to include author information, even for blind
% submissions: the style file will automatically remove it for you
% unless you've provided the [accepted] option to the icml2023
% package.

% List of affiliations: The first argument should be a (short)
% identifier you will use later to specify author affiliations
% Academic affiliations should list Department, University, City, Region, Country
% Industry affiliations should list Company, City, Region, Country

% You can specify symbols, otherwise they are numbered in order.
% Ideally, you should not use this facility. Affiliations will be numbered
% in order of appearance and this is the preferred way.
\icmlsetsymbol{equal}{*}

\begin{icmlauthorlist}
\icmlauthor{Lu Chen}{equal,yyy}
\icmlauthor{Siyu Lou}{equal,yyy,comp}
\icmlauthor{Keyan Zhang}{yyy}
\icmlauthor{Jin Huang}{yyy}
\icmlauthor{Quanshi Zhang}{yyy}
%\icmlauthor{Firstname6 Lastname6}{sch,yyy,comp}
%\icmlauthor{Firstname7 Lastname7}{comp}
%\icmlauthor{}{sch}
%\icmlauthor{Firstname8 Lastname8}{sch}
%\icmlauthor{Firstname8 Lastname8}{yyy,comp}
%\icmlauthor{}{sch}
%\icmlauthor{}{sch}
\end{icmlauthorlist}

\icmlaffiliation{yyy}{Shanghai Jiao Tong University, China}
\icmlaffiliation{comp}{Eastern Institute for Advanced Study, China}
%\icmlaffiliation{sch}{School of ZZZ, Institute of WWW, Location, Country}

\icmlcorrespondingauthor{Quanshi Zhang is the corresponding author. He is with the Department of Computer Science and Engineering,
the John Hopcroft Center, at the Shanghai Jiao Tong University, China.}{zqs1022@sjtu.edu.cn}
%\icmlcorrespondingauthor{Firstname2 Lastname2}{first2.last2@www.uk}

% You may provide any keywords that you
% find helpful for describing your paper; these are used to populate
% the "keywords" metadata in the PDF but will not be shown in the document
\icmlkeywords{Machine Learning, ICML}

\vskip 0.3in
]

% this must go after the closing bracket ] following \twocolumn[ ...

% This command actually creates the footnote in the first column
% listing the affiliations and the copyright notice.
% The command takes one argument, which is text to display at the start of the footnote.
% The \icmlEqualContribution command is standard text for equal contribution.
% Remove it (just {}) if you do not need this facility.

%\printAffiliationsAndNotice{}  % leave blank if no need to mention equal contribution
\printAffiliationsAndNotice{\icmlEqualContribution} % otherwise use the standard text.

\begin{abstract}
The Shapley value is widely regarded as a trustworthy attribution metric. However, when people use Shapley values to explain the attribution of input variables of a deep neural network (DNN), it usually requires a very high computational cost to approximate relatively accurate Shapley values in real-world applications. Therefore, we propose a novel network architecture, the \textit{HarsanyiNet}, which makes inferences on the input sample and simultaneously computes the exact Shapley values of the input variables in a single forward propagation. The HarsanyiNet is designed on the theoretical foundation that the Shapley value can be reformulated as the redistribution of Harsanyi interactions encoded by the network. 
\end{abstract}

\section{Introduction}
\label{Introduction}
Explainable artificial intelligence (XAI) has received considerable attention in recent years. A typical direction of explaining deep neural networks (DNNs) is to estimate the salience\slash importance\slash contribution of an input variable (\eg, a pixel of an image or a word in a sentence) to the network output. Related studies have been termed the \textit{attribution methods}~\citep{bach2015pixel, selvaraju2017grad, sundararajan2017axiomatic, lundberg2017unified}. In comparison with most attribution methods designed without solid theoretical supports, the Shapley value~\citep{shapley1953npersongame} has been proved the only solution in game theory that satisfies the \textit{linearity}, \textit{dummy}, \textit{symmetry}, and \textit{efficiency} axioms~\citep{young1985monotonic}. Therefore, the Shapley value is widely considered a relatively trustworthy attribution for each input variable.

% nullity 

\begin{figure}[t]
\begin{center}
\centerline{\includegraphics[width=0.95\linewidth,trim={0cm 2cm 12cm 0.3cm},clip]{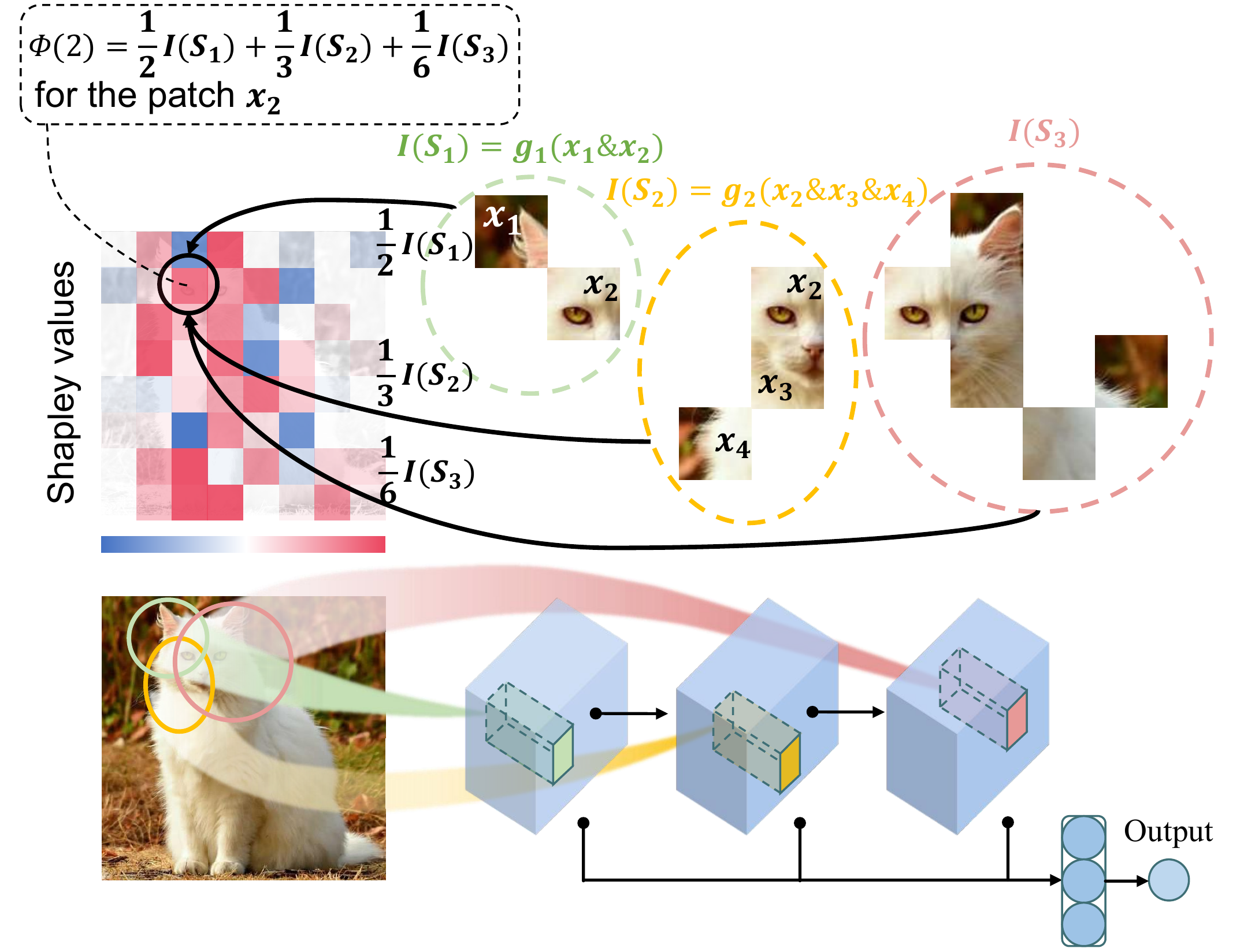}}
%\vskip -0.1in
\caption{Overview of the HarsanyiNet. The HarsanyiNet encodes different Harsanyi interactions, each representing an AND relationship between different patches. Shapley values can be computed as re-allocation of Harsanyi interactions.}
\label{Fig:Harsanyiarchitecture}
\end{center}
\vskip -0.4in
\end{figure}

However, using the Shapley value in real-world applications is often impractical because (1) computing the exact Shapley value is NP-hard, and (2) existing approximation techniques~\citep{castro2009polynomial,lundberg2017unified} often confront a dilemma in that approximating Shapley values with an acceptable accuracy usually requires to conduct a huge number of network inferences.

Thus, in this paper, we aim to directly jump out of the above dilemma and design a neural network, namely \textit{HarsanyiNet}, which simultaneously conducts model inference on the input sample and computes the exact Shapley value for each input variable in a single forward propagation\footnote{\url{https://github.com/csluchen/harsanyinet}}. 

Specifically, the theoretical foundation for the HarsanyiNet is that the Shapley value of an input variable can be reformulated as a redistribution of its different Harsanyi interactions \cite{harsanyi1963simplified} encoded by the DNN. Formally, given a DNN and an input sample with $n$ variables, a Harsanyi interaction $S$ represents an AND relationship between the variables in $S$, which is encoded by the DNN. A DNN usually encodes many different Harsanyi interactions. Each Harsanyi interaction makes a specific numerical contribution, denoted by $I(S)$, to the inference score of the DNN. Let us take the interaction between image patches $S\!=\!\{\textit{eye},\textit{nose}, \textit{mouth}\}$ as a toy example. If all these patches co-appear, then they form a face pattern and make a specific interaction effect $I(S)$ to the confidence score of face detection. Masking any patch will destroy this pattern and remove the interaction effect, \ie, making $I(S)\!=0\!$.

Because it is proven that the Shapley value can be computed using Harsanyi interactions, the activation of each intermediate neuron in the HarsanyiNet is designed to represent a specific Harsanyi interaction. The intermediate neuron is termed as the \textit{Harsanyi unit}. Such a network design enables us to derive the exact Shapley value of an input variable using activation scores of Harsanyi units.

The proposed HarsanyiNet has significant advantages over existing approaches for approximating Shapley values by conducting a single network inference.

$\bullet$ Existing approximation methods,~\eg, DeepSHAP~\cite{lundberg2017unified} and FastSHAP~\cite{jethani2021fastshap}, estimate Shapley values with considerable errors, but the HarsanyiNet can generate \textbf{exact} Shapley values, which is both theoretically guaranteed and experimentally verified.  

$\bullet$ The only existing work allowing to compute accurate Shapley values in a single forward propagation is the ShapNet~\cite{wang2021shapley}. 
%\textcolor{red}{However, we can prove that the ShapNet can be explained as a special case of the HarsanyiNet, although architectures of the two DNNs are dramatically different.}  
However, the ShapNet is designed to only encode interactions between at most $k$ input variables ($k\ll n$, they set $k=4$), and the computational cost of the ShapNet's inference (forward propagation) is $2^k$ times more than that of traditional networks. Alternatively, this study also provides another network (\ie, Deep ShapNet) to encode more complex interactions, but it cannot guarantee the accuracy of the computed Shapley values. In comparison, the HarsanyiNet does not limit the number of the input variables within the interaction, thereby ensuring broader applicability and exhibiting significantly better performance.

%the computational cost of the ShapNet's one forward propagation is $\mathcal{O}(2^k)$, where $k$ is the number of input variables in the interactions encoded by the network. Therefore, the ShapNet was limited to encode interactions between a small number of variables, \ie, $k\ll n$. Otherwise, encoding interactions between more variables would either lead to a very high computational complexity or would no longer guarantee the accuracy of the computed Shapley values.

Moreover, we implement two specific HarsanyiNets in the experiment, the \textit{Harsanyi-MLP} extended from multi-layer perceptrons (MLP) and the \textit{Harsanyi-CNN} developed on convolutional neural networks (CNN). 

The contributions of this paper can be summarized as follows. (1) We propose a novel neural network architecture, the \textit{HarsanyiNet}, which can simultaneously perform model inference and compute exact Shapley values in one forward propagation. (2) Following the paradigm of HarsanyiNet, we design Harsanyi-MLP and Harsanyi-CNN for tabular data and image data, respectively. (3) The HarsanyiNet does not constrain the representation of specific interactions, but it can still guarantee the accuracy of Shapley values.

\section{Related Work}
\label{sec:relatedwork}
Estimating the importance\slash attribution\slash saliency of input variables represents a typical direction in XAI. In general, previous attribution methods usually computed the attributions of input variables based on gradient~\citep{simonyan2013deep, springenberg2014striving, shrikumar2016not, selvaraju2017grad,sundararajan2017axiomatic}, via back-propagation of attributions~\cite{bach2015pixel, montavon2017explaining, shrikumar2017learning}, and based on perturbations on the input variables~\cite{ ribeiro2016should, zintgraf2017visualizing, fong2017interpretable, lundberg2017unified, plumb2018model,ian2021explaining, deng2021mutual, chen2022algorithms}.

\subsection{Shapley values}
Unlike other attribution methods, the Shapley value is designed in game theory. Let us consider the following cooperative game, in which a set of $n$ players $N=\{1,2,\ldots,n\}$ collaborate with each other and finally win a reward $R$. The Shapley value~\citep{shapley1953npersongame} is then developed as a fair approach to allocating the overall reward $R$ to the $n$ players. The Shapley value $\phi(i)$ is defined as the compositional reward allocated from $R$ to the player $i\in N$, and we can consider $\phi(i)$ reflects the numerical contribution of the player $i$ in the game.

\begin{definition}[Shapley values] \label{def:shapley}
    The Shapley value $\phi(i)$ of an input variable $i$ is given by
    \begin{equation}\label{eq:shapley}
    \phi(i)\coloneqq\!\!\!\!\!\!\sum_{S\subseteq N\setminus\{i\}}\!\!\!\!\frac{|S|!(n-|S|-1)!}{n!} \left[V(S \cup \{i\} ) \!\!-\!\! V(S)\right],
    \end{equation}
     where $V: 2^N \mapsto \mathbb{R}$ denotes the reward function, \ie, $\forall S\subseteq N, V(S)$ measures the reward if a subset $S$ of players participate in the game. Thus, $V(\emptyset)=0$, and $V(N)=R$ denotes the overall reward won by all players in $N$.
\end{definition}

\textbf{Faithfulness.} The Shapley value is widely considered a relatively faithful attribution, because~\citet{young1985monotonic} has proved that it is the unique game-theoretic solution that satisfies the \textit{linearity}, \textit{dummy}, \textit{symmetry} and \textit{efficiency} axioms for ideal attributions. Please see Appendix~\ref{sec:appendix_shapley} for details.

In addition to estimating the importance of each input variable, 
the Shapley value is also widely used to estimate the importance of every single data point in a whole dataset, which can be, for instance, used to address data evaluation problem \citep{jia2019efficient, jia2021scalability}.

\subsection{Dilemma of computational complexity versus approximation accuracy}
The biggest challenge of applying Shapley values to real-world applications is the NP-hard computational complexity. According to \cref{eq:shapley} and Section~\ref{Methodology}, when we compute Shapley values for input variables of a DNN, it requires to conduct inference on all the $2^n$ different masked samples. To alleviate the computational burden, many approximation methods \citep{castro2009polynomial,lundberg2017unified,covert2021improving} have been proposed. However, as Figure~\ref{Fig:convergence_tabular} shows, a higher approximation accuracy of Shapley values usually requires more network inferences (\eg, inferences on as many as thousands of masked samples).

Specifically, some approaches estimated Shapley values via sampling techniques~\cite{castro2009polynomial, strumbelj2010efficient, okhrati2021multilinear, mitchell2022sampling}, and some converted the approximation of Shapley values to a weighted least squares problem~\cite{lundberg2017unified, simon2020projected, covert2021improving}. However, these methods all faced a dilemma,~\ie, a more accurate approximation required higher computational costs. 

Other studies accelerated the computation by assuming a specific distribution of data~\cite{chen2018shapley}, ignoring small interactions between input variables~\cite{wang2022accelerating}, or learning an explainer model to directly predict the Shapley value~\cite{jethani2021fastshap}. However, these methods could not generate fully accurate Shapley values. \citet{wang2021shapley} proposed the ShapNets. The ShapNet was constrained to only encode interactions between a small number of variables (usually less than 4). When the ShapNet was extended to encode interactions between more variables, it could no longer estimate exactly accurate Shapley values. In comparison, our HarsanyiNet can accurately compute Shapley values in a single forward propagation, and it is not constrained to encode specific interactions, thereby exhibiting much more flexibility and better performance.

\section{Methodology}
\label{Methodology}

 The Shapley value defined in~\cref{eq:shapley} is widely used to estimate attributions of input variables in a DNN. We consider the DNN as a cooperative game, and consider input variables $\mathbf{x}=[x_1,x_2,...,x_n]^\intercal$ as players, $N=\{1,2,\dots,n\}$. $v(\mathbf{x}) \in \mathbb{R}$ corresponds to the network prediction score\footnote{As in previous studies~\cite{jethani2021fastshap,wang2022accelerating}, if the network has a scalar output, then $v(\mathbf{x})$ can be formulated directly as the network output. If the network has a vector output, \eg, multi-category classification, we may define $v(\mathbf{x})$  as the output dimension corresponding to the ground-truth category.\label{fn:fn1}} on $\mathbf{x}$. Let $\mathbf{x}_S$ denote a masked sample, where input variables in $N \setminus S, S\subseteq N$ are masked by baseline values\footnote{The baseline value can be set as zero, mean value over different inputs or other statistic values in previous studies~\cite{lundberg2017unified, covert2021improving}.}. 
In this way, we can define the total reward gained by the input variables in $S$ as the inference score on the masked sample $\mathbf{x}_S$, \ie, $V(S)\coloneqq v(\mathbf{x}_S) - v(\mathbf{x}_\emptyset)$. Thus, the Shapley value $\phi(i)$ in~\cref{eq:shapley} measures the importance of the $i$-th input variable to the network prediction score.

\subsection{Preliminaries: Harsanyi interactions}
\label{Methodology:preliminary}
The Harsanyi interaction (or the Harsanyi dividend)~\citep{harsanyi1963simplified} provides a deeper insight into the essential reason why the Shapley value is computed as in~\cref{eq:shapley}. A DNN usually does not use each individual input variable to conduct model inference independently. Instead, the DNN models the interactions between different input variables and considers such interactions as basic inference patterns. To this end, the Harsanyi interaction $I(S)$ measures the interactive effect between each subset $S\subseteq N$ of input variables, which is encoded by a DNN.

\begin{definition}[Harsanyi interactions]\label{def:harsanyi}
    The Harsanyi interaction between a set of variables in $S$ \wrt~the model output $v(\mathbf{x})$ is recursively defined $I(S) \coloneqq V(S)- \sum\nolimits_{L\subsetneq S} I(L) =  v(\mathbf{x}_S) - v(\mathbf{x}_\emptyset)- \sum\nolimits_{L\subsetneq S} I(L)$ subject to $I(\emptyset) \coloneqq 0.$
\end{definition}

According to Definition~\ref{def:harsanyi}, the network output can be explained as the sum of all Harsanyi interactions, \ie, $v(\mathbf{x})$$-v(\mathbf{x}_\emptyset)$$=\sum_{S\subseteq N} I(S)$. Essentially, each Harsanyi interaction $I(S)$ reveals an AND relationship between all the variables in set $S$. Let us consider a visual pattern $S=\{\textit{eye},\textit{nose},\textit{mouth}\}$ for face detection as a toy example. If the image patches of \textit{eye}, \textit{nose}, and \textit{mouth} appear together, the co-appearance of the three parts forms a visual pattern and makes a numerical contribution $I(S)$ to the classification score $v(\mathbf{x})$ of the face. Otherwise, masking any part in $S$ will deactivate the pattern and remove the interactive effect, \ie, making $I(S)=0$. 

\citet{grabisch2016set} and \citet{ren2023defining} further proved that the Harsanyi interaction also satisfies the four properties, namely \textit{linearity}, \textit{dummy}, \textit{symmetry} and \textit{efficiency}.

\subsection{Harsanyi interactions compose Shapley values}
\label{Methodology:redefine}
We jump out of the dilemma of computational complexity versus approximation accuracy mentioned in Section~\ref{sec:relatedwork}. Theorem~\ref{theorem1} allows us to derive a novel neural network architecture, namely \textit{HarsanyiNet}, which uses Harsanyi interactions to simultaneously perform model inference and compute exact Shapley values in a single forward propagation. 

\textbf{$\bullet$ Basic requirements for the HarsanyiNet.} The key idea of the HarsanyiNet is to let different intermediate-layer neurons to represent different Harsanyi interactions. Later, we will prove that we can use such a network design to compute the exact Shapley values in a single forward propagation. Specifically, let us introduce the following two designs in the HarsanyiNet. 

Firstly, as shown in Figure~\ref{Fig:Harsanyiarchitecture}, the HarsanyiNet has $L$ cascaded blocks in the neural network. In each block, we add an AND operation layer between a linear layer and a ReLU layer. Given an input sample $\mathbf{x}$, let $z_u^{(l)}(\mathbf{x})$ denote the $u$-th dimension of the output feature vector $\mathbf{z}^{(l)}(\mathbf{x}) \in \mathbb{R}^{m^{(l)}}$ in the $l$-th linear layer. The HarsanyiNet is designed to let each feature dimension $z_u^{(l)}(\mathbf{x})$ satisfy the following two requirements, and $z_u^{(l)}(\mathbf{x})$ is also called a \textit{Harsanyi unit}. Theorem~\ref{thm:calculateShapley} will show how to use the Harsanyi unit to compute exact Shapley values directly.

\begin{requirement}[\textbf{R1}]\label{requirement:1}
    The neural output $z_u^{(l)}(\mathbf{x})$ is exclusively determined by a specific set of input variables $\mathcal{R}_u^{(l)} \subseteq N$,  namely the \textit{receptive field} of neuron $z_u^{(l)}(\mathbf{x})$. In other words, none of the other input variables in $N\setminus \mathcal{R}_u^{(l)}$ affect the neuron activation, \ie, given two arbitrary samples $\mathbf{x}$ and $\mathbf{x}'$, if $~\forall~i\in \mathcal{R}_u^{(l)},$ $x'_i=x_i$, then $z_u^{(l)}(\mathbf{x}') = z_u^{(l)}(\mathbf{x})$.
\end{requirement}

\begin{requirement}[\textbf{R2}]\label{requirement:2}
Masking any variables in the receptive field $\mathcal{R}_u^{(l)}$  of the neuron $z_u^{(l)}$ will make $z_u^{(l)}(\mathbf{x})=0$. Specifically, let $\mathbf{x}_S$ denote the sample obtained by masking variables in the set $N\setminus S$ in the sample $\mathbf{x}$. Then, given any masked sample $\mathbf{x}_S$, the neuron $z_u^{(l)}$ must satisfy the property  $z_u^{(l)}(\mathbf{x}_S) = z_u^{(l)}(\mathbf{x})\cdot \prod_{i\in \mathcal{R}_u^{(l)}}\mathbbm{1}(i \in S)$.
    
\end{requirement}

\textbf{These two requirements indicate that a Harsanyi unit $z_u^{(l)}$ must represent an AND relationship between input variables in $\mathcal{R}_u^{(l)}$.} Changing variables outside the receptive field $\mathcal{R}_u^{(l)}$ will not affect the neural activation of the Harsanyi unit, \ie, $z_u^{(l)}(\mathbf{x}_S)=z_u^{(l)}(\mathbf{x})$, but masking any variables in $\mathcal{R}_u^{(l)}$ will deactivate the unit, \ie, making $z_u^{(l)}(\mathbf{x}_S) = 0$.

Secondly, although the HarsanyiNet may have various types of outputs (including a scalar output, a vectorized output, a matrix output, and a tensor output), \textbf{each dimension of the network output is designed as a weighted sum of all Harsanyi units}. Let $\mathbbm{v}(\mathbf{x})$ denote the multi-dimensional output, and let $v(\mathbf{x})$ be an arbitrary output dimension parameterized by $\{\mathbf{w}_v^{(l)}\}$. Then, we get 
\begin{equation}\label{eq:efficiency}\!\!\!\!\mathbbm{v}(\mathbf{x})\!=\![v(\mathbf{x}), v'(\mathbf{x}),\dots]^\intercal, \quad \!\! v(\mathbf{x})\! = \!\! \sum_{l=1}^{L}(\mathbf{w}_v^{(l)})^\intercal \mathbf{z}^{(l)}(\mathbf{x}), \!\!
\end{equation} where $\mathbf{w}_v^{(l)}\in \mathbb{R}^{m^{(l)}}$ denotes the weight for the specific output dimension $v$. As shown in Figure~\ref{Fig:Harsanyiarchitecture}, the above equation can be implemented by adding skip connections to connect the Harsanyi units in all $L$ layers to the HarsanyiNet output.

\textbf{$\bullet$ Proving that we can compute accurate Shapley values in a single forward propagation.} The preceding paragraphs only outline the two requirements for Harsanyi units, and Section~\ref{sec:construct} will introduce how to force neurons to meet such requirements. Before that, we derive Theorem~\ref{thm:calculateShapley} to prove that the above requirements allow us to compute the exact Shapley values in a forward propagation.

\begin{theorem}[Connection between Shapley values and Harsanyi interactions, proof in \citep{harsanyi1963simplified}] \label{theorem1} The Shapley value $\phi(i)$ equals to the sum of evenly distributed Harsanyi interactions that contain $i$, \textit{i.e.}, 
\begin{equation}\label{eq:HarsanyiShapley}
    \phi(i) = \sum\nolimits_{S\subseteq N: S \ni i} \frac{1}{|S|}I(S).
\end{equation}
\end{theorem}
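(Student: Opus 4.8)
The plan is to exploit the linearity of the Shapley value in the reward function $V$ together with the fact that the Harsanyi dividends $I(S)$ are exactly the coordinates of $V$ in the basis of \emph{unanimity games}. First I would unroll Definition~\ref{def:harsanyi}: the recursion $I(S)=V(S)-\sum_{L\subsetneq S}I(L)$ rearranges immediately to the M\"obius-type identity $V(T)=\sum_{S\subseteq T}I(S)$ for every $T\subseteq N$. Introducing the unanimity game $u_S(T)\coloneqq\mathbbm{1}(S\subseteq T)$, this reads $V=\sum_{\emptyset\neq S\subseteq N}I(S)\,u_S$ as games, where the empty set drops out because $I(\emptyset)=0$.

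Next I would compute the Shapley value of a single unanimity game $u_S$ purely from the axioms. Any player $j\notin S$ is a dummy, since $u_S(T\cup\{j\})=u_S(T)$ for all $T$ (adding $j\notin S$ cannot change whether $S\subseteq T$), so the dummy axiom gives $\phi(j)[u_S]=0$. The players inside $S$ are mutually interchangeable, so by symmetry they receive equal value, and by efficiency their values sum to $u_S(N)=1$. Hence $\phi(i)[u_S]=1/|S|$ for $i\in S$ and $0$ otherwise. Applying the linearity axiom to the decomposition above then yields $\phi(i)=\sum_{\emptyset\neq S\subseteq N}I(S)\,\phi(i)[u_S]=\sum_{S\subseteq N:\,S\ni i}\frac{1}{|S|}I(S)$, which is exactly~\cref{eq:HarsanyiShapley}.

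The main obstacle here is less a technical difficulty than making sure the two ingredients are rigorously licensed: that the Shapley value is genuinely linear in $V$ (one of the four axioms recalled in the excerpt) and that the unanimity games form a basis so the dividend expansion is well defined and unique. As a self-contained alternative that sidesteps the axiomatic shortcut, I would substitute $V(S\cup\{i\})-V(S)=\sum_{L:\,i\in L\subseteq S\cup\{i\}}I(L)$ directly into Definition~\ref{def:shapley}, swap the order of summation, and collect the coefficient of each $I(L)$ with $i\in L$. Writing $k=|L|$, this coefficient becomes $\sum_{j=0}^{n-k}\binom{n-k}{j}\frac{(j+k-1)!\,(n-j-k)!}{n!}$, and the crux of this route is evaluating that sum. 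I would do so with the Beta integral $\frac{(j+k-1)!\,(n-j-k)!}{n!}=\int_0^1 t^{j+k-1}(1-t)^{n-j-k}\,dt$; pulling the sum inside collapses $\sum_{j}\binom{n-k}{j}t^j(1-t)^{n-k-j}=1$ by the binomial theorem, leaving $\int_0^1 t^{k-1}\,dt=1/k=1/|L|$, again reproducing~\cref{eq:HarsanyiShapley}.
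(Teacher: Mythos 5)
Your proposal is correct on both routes. Note, however, that the paper itself contains no proof of Theorem~\ref{theorem1}: it simply cites \citet{harsanyi1963simplified}, and its appendix only proves the downstream results (Lemma~\ref{thm:J(S)}, Theorems~\ref{thm:linearity}, \ref{thm:calculateShapley}, and \ref{thm:implementation}). Your first argument is essentially the canonical proof that the citation stands in for: M\"obius inversion of Definition~\ref{def:harsanyi} gives $V(T)=\sum_{S\subseteq T}I(S)$, i.e.\ the dividends are the coordinates of $V$ in the unanimity basis; dummy, symmetry, and efficiency pin down $\phi(\cdot)[u_S]$ as $\tfrac{1}{|S|}\mathbbm{1}(i\in S)$; and linearity (a property the paper's Appendix~\ref{sec:appendix_shapley} explicitly grants the formula-defined Shapley value) transports this through the decomposition. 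One nicety you correctly handle: the $S=\emptyset$ term vanishes because $I(\emptyset)=0$, and the "basis/uniqueness" worry you raise is actually harmless here, since the recursion constructs the expansion explicitly, so only existence of the decomposition is needed, not uniqueness. Your second route is a genuinely independent, self-contained verification that never invokes the axioms: substituting $V(S\cup\{i\})-V(S)=\sum_{L:\,i\in L\subseteq S\cup\{i\}}I(L)$ into Definition~\ref{def:shapley}, exchanging sums, and evaluating the coefficient of $I(L)$ via the Beta-integral identity $\frac{(j+k-1)!\,(n-j-k)!}{n!}=\int_0^1 t^{j+k-1}(1-t)^{n-j-k}\,dt$ does collapse, by the binomial theorem, to $\int_0^1 t^{k-1}\,dt=\tfrac{1}{|L|}$; I checked the combinatorics (for fixed $L\ni i$ with $|L|=k$, the admissible $S$ are $L\setminus\{i\}$ joined with any $j$ of the remaining $n-k$ players, giving $|S|=k-1+j$) and it is sound. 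What each buys: the axiomatic route is shorter and illuminates \emph{why} the weight is $\tfrac{1}{|S|}$ (even division of each unanimity reward), which is exactly the redistribution intuition the paper builds HarsanyiNet on; the direct computation buys independence from the axiom system and would survive even if one only trusted the explicit formula in \cref{eq:shapley}. Either would serve as a valid in-paper proof where the authors instead chose a citation.
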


Theorem~\ref{theorem1} demonstrates that we can understand the Shapley value $\phi(i)$ as a uniform reassignment of each Harsanyi interaction $I(S)$ which includes the variable $i$. 
For example, let us consider a toy model that uses \textit{age (a)}, \textit{education (e)}, \textit{occupation (o)}, and \textit{marital status (m)} to infer the income level. We assume that we can only decompose four non-zero Harsanyi interactions, \ie, $v(\mathbf{x})\!=\! \sum_{S\subseteq N=\{a,e,o,m\}}I(S)\!=\!I(\{a,o\})+I(\{a,e\}) + I(\{a,o,m\}) + I(\{o,m\})$ to simplify the story. We uniformly allocate the numerical contribution $I(\{a,o, m\})$ to variables \textit{age}, \textit{occupation}, and \textit{marital status}, with each receiving $\frac{1}{3}I(\{a,o,r\})$ as a component of its attribution. In this way, each input variable accumulates compositional attributions from different Harsanyi interactions, \eg, $\hat{\phi}(a)=\frac{1}{2}I(\{a,o\})+\frac{1}{2}I(\{a,e\}) +\frac{1}{3}I(\{a,o,m\})$. Such an accumulated attribution $\hat{\phi}(a)$ equals to the Shapley value $\phi(a)$. 

\begin{lemma}[Harsanyi interaction of a Harsanyi unit, proof in Appendix~\ref{appendix:lemma}]\label{thm:J(S)}
    Let us consider the output of a Harsanyi unit $z_u^{(l)}(\mathbf{x})$ as the reward. Then, let $J_u^{(l)}(S)$ denote the Harsanyi interaction \wrt~the function $z_u^{(l)}(\mathbf{x})$. Then, we have $J_u^{(l)}(\mathcal{R}_u^{(l)})=z_u^{(l)}(\mathbf{x})$, and $\forall S\neq \mathcal{R}_u^{(l)}, J_u^{(l)}(S) =0$, according to the two requirements R\ref{requirement:1} and R\ref{requirement:2}.
\end{lemma}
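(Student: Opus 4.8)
The plan is to apply Definition~\ref{def:harsanyi} directly to the reward function $z_u^{(l)}(\mathbf{x})$ and to first convert the two requirements into a closed form for the underlying reward before unwinding the recursion. Writing $R := \mathcal{R}_u^{(l)}$ and $z := z_u^{(l)}(\mathbf{x})$, I would use Requirement~\ref{requirement:2} to note that the product $\prod_{i \in R} \mathbbm{1}(i \in S)$ equals $1$ exactly when $R \subseteq S$ and $0$ otherwise, so that $z_u^{(l)}(\mathbf{x}_S) = z \cdot \mathbbm{1}(R \subseteq S)$. Taking $S = \emptyset$ (and assuming the receptive field $R$ is nonempty, which is the case of interest) gives $z_u^{(l)}(\mathbf{x}_\emptyset) = 0$, so the reward governing the Harsanyi interaction is simply $z_u^{(l)}(\mathbf{x}_S) - z_u^{(l)}(\mathbf{x}_\emptyset) = z \cdot \mathbbm{1}(R \subseteq S)$. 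Requirement~\ref{requirement:1} is what legitimizes treating this quantity as a well-defined set function on $2^N$ in the first place.

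With this closed form in hand, I would establish the single claim $J_u^{(l)}(S) = z \cdot \mathbbm{1}(S = R)$ by strong induction on $|S|$, from which both parts of the lemma follow immediately. The recursive definition gives $J_u^{(l)}(S) = z \cdot \mathbbm{1}(R \subseteq S) - \sum_{L \subsetneq S} J_u^{(l)}(L)$. By the induction hypothesis every proper-subset term vanishes except possibly the one at $L = R$, so the sum collapses to $z \cdot \mathbbm{1}(R \subsetneq S)$. Subtracting, $J_u^{(l)}(S) = z\,[\mathbbm{1}(R \subseteq S) - \mathbbm{1}(R \subsetneq S)]$, and the bracketed quantity equals $1$ precisely when $S = R$ and $0$ in the two remaining cases ($R \subsetneq S$ and $R \not\subseteq S$). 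Thus $J_u^{(l)}(R) = z$ while $J_u^{(l)}(S) = 0$ for every $S \neq R$.

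I expect the main (and really the only) subtlety to be careful bookkeeping with the indicators, especially keeping the strict containment $R \subsetneq S$ appearing in the recursion distinct from the non-strict $R \subseteq S$ coming from the reward; their difference is exactly what isolates the lone surviving value at $S = R$. As an alternative I could bypass the induction via Möbius inversion on the Boolean lattice, writing $J_u^{(l)}(S) = \sum_{L \subseteq S} (-1)^{|S|-|L|}\, z\, \mathbbm{1}(R \subseteq L)$ and reindexing over $\{M : M \subseteq S \setminus R\}$ so the alternating sum becomes $(1-1)^{|S \setminus R|}$, which is nonzero only when $S = R$; both routes land in the same place, and I would present the induction as primary since it uses the recursive form of Definition~\ref{def:harsanyi} without extra machinery. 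Finally I would dispose of the degenerate possibility $R = \emptyset$ by noting that such a unit is input-independent (a constant) and is not a genuine Harsanyi unit, so it can be set aside and $R \neq \emptyset$ assumed throughout.
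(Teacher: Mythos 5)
Your proof is correct and takes essentially the same route as the paper's: both reduce the claim to the closed form $z_u^{(l)}(\mathbf{x}_S) = z_u^{(l)}(\mathbf{x})\cdot\mathbbm{1}(\mathcal{R}_u^{(l)} \subseteq S)$ obtained from R2 and then unwind the recursive definition of the Harsanyi dividend by induction on $|S|$. Your single unified induction on the claim $J_u^{(l)}(S) = z_u^{(l)}(\mathbf{x})\cdot\mathbbm{1}(S = \mathcal{R}_u^{(l)})$ packages more cleanly what the paper proves via a three-case split ($S \subsetneq \mathcal{R}_u^{(l)}$, $S \supsetneq \mathcal{R}_u^{(l)}$, and incomparable $S$), and your explicit exclusion of the degenerate case $\mathcal{R}_u^{(l)} = \emptyset$ (needed so that $z_u^{(l)}(\mathbf{x}_\emptyset) = 0$ and the baseline term vanishes) is a point of care the paper leaves implicit.
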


\begin{theorem}[Proof in Appendix~\ref{appendix:theorems}] \label{thm:linearity}
     Let a network output $v(\mathbf{x})\in \mathbb{R}$ be represented as $v(\mathbf{x}) = \sum\nolimits_{l=1}^{L}(\mathbf{w}_v^{(l)})^\intercal \mathbf{z}^{(l)}(\mathbf{x})$, according to~\cref{eq:efficiency}. In this way, the Harsanyi interaction between input variables in the set $S$ computed on the network output $v(\mathbf{x})$ can be represented as $I(S) = \sum\nolimits_{l=1}^{L}\sum\nolimits_{u=1}^{m^{(l)}}w_{v,u}^{(l)}J_u^{(l)}(S).$
\end{theorem}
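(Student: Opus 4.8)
The plan is to reduce the claim to the \emph{linearity of the Harsanyi interaction as a functional of its reward function}, a property already noted after Definition~\ref{def:harsanyi}. First I would make this linearity explicit and self-contained by unrolling the recursion in Definition~\ref{def:harsanyi} into its closed (Möbius) form. Summing the recursion gives $V(S)=\sum_{L\subseteq S} I(L)$, and Möbius inversion over the subset lattice yields $I(S)=\sum_{L\subseteq S}(-1)^{|S|-|L|}V(L)$. Since this expresses $I(S)$ as a fixed linear combination of the values $\{V(L)\}_{L\subseteq S}$ with coefficients independent of the particular reward function, the map $V\mapsto I(\cdot)$ is linear: if $V=\sum_k c_k V_k$ for reward functions $V_k$ with corresponding Harsanyi interactions $I_k$, then $I(S)=\sum_k c_k I_k(S)$ for every $S$.

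Next I would identify the reward functions in play. For the chosen output dimension, the reward is $V(S)=v(\mathbf{x}_S)-v(\mathbf{x}_\emptyset)$, while for each Harsanyi unit the associated reward is $V_u^{(l)}(S)=z_u^{(l)}(\mathbf{x}_S)-z_u^{(l)}(\mathbf{x}_\emptyset)$, whose Harsanyi interaction is exactly $J_u^{(l)}(S)$ as used in Lemma~\ref{thm:J(S)}. Substituting the representation $v(\mathbf{x})=\sum_{l=1}^{L}(\mathbf{w}_v^{(l)})^\intercal\mathbf{z}^{(l)}(\mathbf{x})$ from~\cref{eq:efficiency} into the definition of $V$, I would write $V(S)=\sum_{l=1}^{L}\sum_{u=1}^{m^{(l)}}w_{v,u}^{(l)}\,\big(z_u^{(l)}(\mathbf{x}_S)-z_u^{(l)}(\mathbf{x}_\emptyset)\big)=\sum_{l,u}w_{v,u}^{(l)}V_u^{(l)}(S)$, so that the reward of $v$ is literally a weighted sum of the unit rewards with constant weights $w_{v,u}^{(l)}$. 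Applying the linearity established in the first step then gives $I(S)=\sum_{l=1}^{L}\sum_{u=1}^{m^{(l)}}w_{v,u}^{(l)}J_u^{(l)}(S)$, which is the desired identity.

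The step requiring the most care is bookkeeping around the baseline subtraction. The decomposition must respect that $V(\emptyset)=0$ and $V_u^{(l)}(\emptyset)=0$ so that the two notions of Harsanyi interaction are compatible term by term; here Requirement~R\ref{requirement:2} is convenient, since taking $S=\emptyset$ forces $z_u^{(l)}(\mathbf{x}_\emptyset)=0$ for units with nonempty receptive field, and hence $v(\mathbf{x}_\emptyset)=0$, making the constant offsets vanish cleanly. Beyond this, the argument is essentially a one-line consequence of linearity; the only genuine content is justifying that linearity, which I would either cite (following \citet{grabisch2016set} and \citet{ren2023defining}) or, to keep the proof self-contained, prove in one line from the Möbius formula as above. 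I would not invoke Requirement~R\ref{requirement:1} or the specific form of $J_u^{(l)}$ from Lemma~\ref{thm:J(S)}, as this theorem holds for \emph{any} weighted-sum output and does not depend on the AND structure of the units.
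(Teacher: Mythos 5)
Your proof is correct and takes essentially the same route as the paper's: both reduce the claim to the linearity of the Harsanyi interaction as a functional of the reward, applied to the decomposition $V(S)=\sum_{l=1}^{L}\sum_{u=1}^{m^{(l)}}w_{v,u}^{(l)}V_u^{(l)}(S)$. The only difference is one of self-containedness---you justify that linearity explicitly via M\"obius inversion and track the baseline offsets $v(\mathbf{x}_\emptyset)$ and $z_u^{(l)}(\mathbf{x}_\emptyset)$ (which vanish by R\ref{requirement:2}), whereas the paper's proof simply invokes the linearity property and leaves these offsets implicit.
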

Theorem~\ref{thm:linearity} shows that the Harsanyi interaction $I(S)$~\wrt~network output $v(\mathbf{x})$ can be represented as the sum of Harsanyi interactions $J_u^{(l)}(S)$  computed on different Harsanyi units $z^{(l)}_u(\mathbf{x})$. In this manner, we plug the conclusions in Lemma~\ref{thm:J(S)} and Theorem~\ref{thm:linearity} into~\cref{eq:HarsanyiShapley}, and we derive the following theorem.

\begin{theorem}[\textbf{Deriving Shapley values from Harsanyi units in intermediate layers}, proof in Appendix~\ref{appendix:theorems}]\label{thm:calculateShapley}
    The Shapley value $\phi(i)$ can be computed as
    \begin{equation}\label{eq:harsanyiunitshapley}
        \phi(i)\! = \!\sum\nolimits_{l=1}^{L}\sum\nolimits_{u=1}^{m^{(l)}} \frac{1}{|\mathcal{R}_u^{(l)}|} w_{v,u}^{(l)} z_u^{(l)}(\mathbf{x})\mathbbm{1}(\mathcal{R}_u^{(l)}\ni i).
        %\phi(i) = \!\!\!\!\!\sum_{S \subseteq N, S\ni i}\frac{1}{|S|}I(S) = \!\!\!\!\!\sum_{S \subseteq N, S\ni i}\frac{1}{|S|}\sum_uw_u J_u(S).
    \end{equation}
\end{theorem}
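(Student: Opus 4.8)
The plan is to obtain the closed-form expression by directly substituting the two previously established results into the Harsanyi decomposition of the Shapley value given in Theorem~\ref{theorem1}. Starting from $\phi(i) = \sum_{S\subseteq N: S\ni i} \frac{1}{|S|} I(S)$, I would replace each $I(S)$ by its representation from Theorem~\ref{thm:linearity}, namely $I(S) = \sum_{l=1}^{L}\sum_{u=1}^{m^{(l)}} w_{v,u}^{(l)} J_u^{(l)}(S)$. This reduces the entire computation to a purely algebraic manipulation of finite sums, with no further analytic input required.

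The key step is then to exchange the order of summation, pulling the (finite) sums over layers $l$ and units $u$ to the outside, so that the expression becomes $\phi(i) = \sum_{l=1}^{L}\sum_{u=1}^{m^{(l)}} w_{v,u}^{(l)} \bigl( \sum_{S\subseteq N: S\ni i} \frac{1}{|S|} J_u^{(l)}(S) \bigr)$. Now I would invoke Lemma~\ref{thm:J(S)}, which asserts that $J_u^{(l)}(S)$ vanishes for every $S\neq \mathcal{R}_u^{(l)}$ and equals $z_u^{(l)}(\mathbf{x})$ when $S=\mathcal{R}_u^{(l)}$. Hence the inner sum over $S$ collapses to a single surviving term: the one with $S=\mathcal{R}_u^{(l)}$, which lies in the range of summation precisely when $\mathcal{R}_u^{(l)}\ni i$. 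This term contributes $\frac{1}{|\mathcal{R}_u^{(l)}|} z_u^{(l)}(\mathbf{x})$ if $i\in \mathcal{R}_u^{(l)}$ and zero otherwise, i.e.\ exactly $\frac{1}{|\mathcal{R}_u^{(l)}|} z_u^{(l)}(\mathbf{x}) \mathbbm{1}(\mathcal{R}_u^{(l)}\ni i)$. Substituting this back yields~\cref{eq:harsanyiunitshapley}.

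There is no genuinely hard step here, since all the heavy lifting has already been carried out in Lemma~\ref{thm:J(S)} and Theorem~\ref{thm:linearity}; the argument is essentially a bookkeeping exercise. The one point deserving care is the interaction between the constraint $S\ni i$ inherited from Theorem~\ref{theorem1} and the support condition $S=\mathcal{R}_u^{(l)}$ coming from the lemma: it is the conjunction of these two conditions that produces the indicator $\mathbbm{1}(\mathcal{R}_u^{(l)}\ni i)$, and one must check that when $i\notin\mathcal{R}_u^{(l)}$ the unique nonzero interaction term simply falls outside the summation range and so contributes nothing. Justifying the interchange of summation order is immediate because all index sets involved are finite.
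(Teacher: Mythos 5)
Your proposal is correct and follows essentially the same route as the paper's own proof: both substitute the decomposition $I(S) = \sum_{l,u} w_{v,u}^{(l)} J_u^{(l)}(S)$ from Theorem~\ref{thm:linearity} into the formula of Theorem~\ref{theorem1} and use Lemma~\ref{thm:J(S)} to collapse the sum over $S$ to the single term $S=\mathcal{R}_u^{(l)}$, yielding the indicator $\mathbbm{1}(\mathcal{R}_u^{(l)}\ni i)$. If anything, your write-up makes explicit the interchange of summations and the conjunction of the conditions $S\ni i$ and $S=\mathcal{R}_u^{(l)}$, which the paper's three-line derivation leaves implicit.
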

Theorem~\ref{thm:calculateShapley} demonstrates that the Shapley value $\phi(i)$ can be directly computed using the outputs of Harsanyi units $\mathbf{z}^{(l)}(\mathbf{x})$ in the intermediate layers in forward propagation.

\textbf{Cost of computing HarsanyiNet.} We conduct one network inference to obtain the outputs of all Harsanyi units $z_u^{(l)}(\mathbf{x})$. Then, we compute Shapley values based on~\cref{eq:harsanyiunitshapley}, whose computational cost is $\mathcal{O}(nM)$, where {\small{$M\!=\!\sum_{l=1}^Lm^{(l)}$}} denotes the total number of Harsanyi units. The computational cost $\mathcal{O}(nM)$ is negligible, compared to the heavy computational cost of one forward propagation. \textbf{Therefore, we can roughly consider the overall cost of computing Shapley values as one forward propagation.}

\subsection{Designing the HarsanyiNet towards R\ref{requirement:1} and R\ref{requirement:2}} 
\label{sec:construct}
This subsection first introduces the detailed design of the HarsanyiNet. The basic idea is to construct a neural network in which each neuron represents an AND relationship between its children nodes in the previous layers, and the neuron's receptive field can be computed as the union of the receptive fields of its children nodes. Then, Theorem~\ref{thm:implementation} proves that such a network design satisfies the requirements R\ref{requirement:1} and R\ref{requirement:2} in Section~\ref{Methodology:redefine}.

\textbf{Harsanyi blocks.} As Figure~\ref{Fig:Harsanyiarchitecture} shows, the HarsanyiNet contains $L$ cascaded \textit{Harsanyi blocks.} Specifically, we use tuple $(l,u)$ to denote the $u$-th neuron in the $l$-th Harsanyi block's linear layer. Each neuron $(l,u)$ has a set of children nodes $\mathcal{S}_u^{(l)}$. The children nodes in $\mathcal{S}_u^{(l)}$ can be selected from all neurons in all ($l\!-\!1$) previous Harsanyi blocks\footnote{In particular, children nodes in  $\mathcal{S}_u^{(1)}$ are directly selected from the input variables.}. Alternatively, we can just select children nodes from the $(l\!-\!1)$-th block, as a simplified implementation. The children nodes $\mathcal{S}_u^{(l)}$ which can be learned for each neuron $(l,u)$ will be introduced later. Thus, given the children set $\mathcal{S}_u^{(l)}$, the neural activation $z_u^{(l)}(\mathbf{x})$ of the neuron $(l, u)$ is computed by applying the linear, AND, and ReLU operations
\begin{align}
\label{eq:implementation1}
     g^{(l)}_u(\mathbf{x}) &=\!\!(\mathbf{A}^{(l)}_u)^\intercal\!\!\cdot\!\!\left(\mathbf{\Sigma}_u^{(l)} \!\cdot \!\mathbbm{z}^{(l-1)}\right).\begin{array}{r}//\footnotesize{\text{Linear operation}}  \\
         \footnotesize{\text{on children nodes}}  
    \end{array} \\
    \label{eq:implementation2}
    h^{(l)}_u(\mathbf{x}) &= g^{(l)}_u(\mathbf{x}) \!\cdot\!\!\!\!\!\!\!\!\prod_{(l',u')\in\mathcal{S}_u^{(l)}}\!\!\!\!\!\!\!\mathbbm{1}(z^{(l')}_{u'}(\mathbf{x})\neq 0).~~~~\begin{array}{r}
         //\footnotesize{\text{AND}}  \\
         \footnotesize{\text{operation}}  
    \end{array} \\
    \label{eq:implementation3}
     z^{(l)}_u(\mathbf{x}) &=\text{ReLU}( h^{(l)}_u(\mathbf{x})).~~~~~\begin{array}{r}
         //\footnotesize{\text{Non-linear operation}} 
    \end{array}
\end{align}

In the above equations, $\mathbbm{z}^{(l-1)} = [\mathbf{z}^{(1)}(\mathbf{x})^\intercal, \mathbf{z}^{(2)}(\mathbf{x})^\intercal,\dots, \allowbreak \mathbf{z}^{(l-1)}(\mathbf{x})^\intercal]^\intercal \in \mathbb{R}^{M^{(l)}}$ vectorizes the neurons in all the previous $(l-1)$ blocks. The children set $\mathcal{S}_u^{(l)}$ is implemented as a binary diagonal matrix $\mathbf{\Sigma}_u^{(l)} \in \{0,1\}^{M^{(l)}\times M^{(l)}}$, which selects children nodes of the neuron $(l,u)$ from all $M^{(l)}=\sum_{l'=1}^{l-1}m^{(l')}$ neurons in all the $(l-1)$ previous blocks. 
$\mathbf{A}_u^{(l)} \in \mathbb{R}^{M^{(l)}}$ denotes the weight vector.

The three operations in~\cref{eq:implementation1,eq:implementation2,eq:implementation3} ensure that each Harsanyi unit $z_u^{(l)}(\mathbf{x})$ represents an AND relationship among its children nodes (more discussions in Appendix~\ref{appendix: discussion}). 

To implement the children selection in~\cref{eq:implementation1}, we compute the binary diagonal matrix $\mathbf{\Sigma}_u^{(l)}$ by setting $(\mathbf{\Sigma}_u^{(l)})_{i,i}\!\!=\!\!\mathbbm{1}((\boldsymbol{\tau}_u^{(l)})_i>0)$, where $\boldsymbol{\tau}_u^{(l)} \in \mathbb{R}^{M^{(l)}}$ is a trainable parameter vector. Note that during the training phase, the gradient of the loss function cannot pass through $\mathbf{\Sigma}_u^{(l)}$ to $\boldsymbol{\tau}_u^{(l)}$ in the above implementation; therefore, we employ Straight-Through Estimators (STE)~\citep{bengio2013estimating} to train the parameter $\boldsymbol{\tau}_u^{(l)}$. The STE uses $(\mathbf{\Sigma}_u^{(l)})_{i,i}\!\!\!=\!\!\!\mathbbm{1}((\boldsymbol{\tau}_u^{(l)})_i>0)$ in the forward-propagation and set $\partial(\mathbf{\Sigma}_u^{(l)})_{i,i}/\partial(\boldsymbol{\tau}_u^{(l)})_i\!=\!\beta e^{-(\boldsymbol{\tau}_u^{(l)})_i}/(1+e^{-(\boldsymbol{\tau}_u^{(l)})_i})^2$ in the back-propagation process, where $\beta$ is a positive scalar. Besides, to reduce the optimization difficulty of the AND operation in~\cref{eq:implementation2}, we approximate the AND operation as
{\small$h_u^{(l)}(\mathbf{x})\!\!=\small g_u^{(l)}(\mathbf{x})$ $\left[\prod_{u'=1}^{M^{(l)}}(\mathbf{\Sigma}_u^{(l)}\!\cdot\! \tanh(\gamma\! \cdot\! \mathbf{\Sigma}_u^{(l)} \mathbbm{z}^{(l-1)}) \!+\! (\mathbf{I}\!-\! \mathbf{\Sigma}_u^{(l)})\!\cdot\! \mathbf{1})_{u'}\right]^{1/\mathbf{tr}(\mathbf{\Sigma}_u^{(l)})}$}, where $\gamma$ is a positive scalar. Here, each output dimension of the function $\tanh(\cdot)$ is within the range of $[0,1)$, since $\mathbbm{z}^{(l\!-\!1)}$ passes through the ReLU operation, and $\forall u, \mathbbm{z}^{(l\!-\!1)}_u\!\geq\! 0$.  

\textbf{Input and receptive field.} Let us set $\mathbf{z}^{(0)}=\mathbf{x}-\mathbf{b} \in \mathbb{R}^n$ as the input of the linear operation in the first Harsanyi block, and let us define the baseline value $b_i$ as the masking state of each input variable $x_i$. To further simplify the implementation, we adopt a single value baseline $\mathbf{b}$\footnote{We simply set the baseline value $\mathbf{b} = \mathbf{0}$, since we use the ReLU function as the non-linear operation.}. It is worth noting that more sophisticated baseline values have been discussed in~\citep{lundberg2017unified,covert2020understanding, sundararajan2020many,chen2022algorithms}. 
Based on~\cref{eq:implementation1,eq:implementation2,eq:implementation3}, we can obtain that the receptive field $\mathcal{R}^{(l)}_u$ of a Harsanyi unit $(l, u)$ can be computed recursively, as follows.
    \begin{equation}\label{eq:rf}
        \mathcal{R}^{(l)}_u := \cup_{(l',u')\in \mathcal{S}^{(l)}_{u}}\mathcal{R}^{(l')}_{u'}, \quad \st~\mathcal{R}^{(1)}_u := \mathcal{S}^{(1)}_u.
    \end{equation}

\begin{theorem}[proof in Appendix~\ref{appendix:theorems}]\label{thm:implementation}
Based on~\cref{eq:implementation1,eq:implementation2,eq:implementation3}, the receptive field $\mathcal{R}^{(l)}_u$ of the neuron $z_u^{(l)}$ automatically satisfies the two requirements R\ref{requirement:1} and R\ref{requirement:2}.
\end{theorem}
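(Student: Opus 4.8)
The plan is to prove both requirements by induction on the block index $l$, exploiting the recursive definition of the receptive field in \cref{eq:rf}. To make the base case fit the same pattern as the inductive step, I would regard each input variable as a degenerate ``level-$0$ Harsanyi unit'' $z_i^{(0)}(\mathbf{x}) = x_i - b_i$ with receptive field $\mathcal{R}_i^{(0)} = \{i\}$. Since $\mathbf{b}=\mathbf{0}$, masking variable $i$ forces $z_i^{(0)}(\mathbf{x}_S)=0$ while leaving it unmasked gives $z_i^{(0)}(\mathbf{x}_S)=z_i^{(0)}(\mathbf{x})$, so R\ref{requirement:1} and R\ref{requirement:2} hold trivially at level $0$, and the children of a first-block unit are exactly these level-$0$ units.

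First I would establish R\ref{requirement:1} (locality). Assuming every child $(l',u')\in\mathcal{S}_u^{(l)}$ satisfies R\ref{requirement:1} with receptive field $\mathcal{R}_{u'}^{(l')}$, I observe that $g_u^{(l)}(\mathbf{x})$ in \cref{eq:implementation1} is a linear combination of exactly the children selected by $\mathbf{\Sigma}_u^{(l)}$, that the product of indicators in \cref{eq:implementation2} involves the same children, and that the ReLU in \cref{eq:implementation3} acts pointwise. Hence $z_u^{(l)}(\mathbf{x})$ is a function only of $\{z_{u'}^{(l')}(\mathbf{x}):(l',u')\in\mathcal{S}_u^{(l)}\}$, each of which depends only on the variables in $\mathcal{R}_{u'}^{(l')}$; their union is precisely $\mathcal{R}_u^{(l)}$ by \cref{eq:rf}. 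Thus if $\mathbf{x}$ and $\mathbf{x}'$ agree on $\mathcal{R}_u^{(l)}$, they agree on every $\mathcal{R}_{u'}^{(l')}$ and so $z_u^{(l)}(\mathbf{x}')=z_u^{(l)}(\mathbf{x})$, which is R\ref{requirement:1}.

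Next I would prove R\ref{requirement:2} (the AND/masking identity), again by induction and now using R\ref{requirement:1} already in hand. Fixing a masked sample $\mathbf{x}_S$, I split into two cases. If $\mathcal{R}_u^{(l)}\subseteq S$, then $\mathbf{x}_S$ and $\mathbf{x}$ coincide on all of $\mathcal{R}_u^{(l)}$ (every masked variable lies outside it), so R\ref{requirement:1} gives $z_u^{(l)}(\mathbf{x}_S)=z_u^{(l)}(\mathbf{x})$, matching the right-hand side, whose product of indicators equals $1$. If instead some $i^*\in\mathcal{R}_u^{(l)}$ is masked, then by \cref{eq:rf} there is a child $(l',u')\in\mathcal{S}_u^{(l)}$ with $i^*\in\mathcal{R}_{u'}^{(l')}$; the inductive hypothesis (R\ref{requirement:2} at level $l'<l$) forces $z_{u'}^{(l')}(\mathbf{x}_S)=0$, so the factor $\mathbbm{1}(z_{u'}^{(l')}(\mathbf{x}_S)\neq 0)$ in \cref{eq:implementation2} vanishes, whence $h_u^{(l)}(\mathbf{x}_S)=0$ and $z_u^{(l)}(\mathbf{x}_S)=\text{ReLU}(0)=0$, matching the right-hand side whose product is now $0$. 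Combining the cases yields $z_u^{(l)}(\mathbf{x}_S)=z_u^{(l)}(\mathbf{x})\prod_{i\in\mathcal{R}_u^{(l)}}\mathbbm{1}(i\in S)$, \ie~R\ref{requirement:2}.

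The main obstacle is the bookkeeping in the second case: translating ``a masked variable lies in $\mathcal{R}_u^{(l)}$'' into ``a specific selected child is deactivated'' relies precisely on the union structure of \cref{eq:rf}, and one must verify that the deactivating child is genuinely one of the factors appearing in the product of \cref{eq:implementation2}, \ie, that $\mathcal{S}_u^{(l)}$ indexes exactly the children whose receptive fields union to $\mathcal{R}_u^{(l)}$. I would also flag the benign edge case in which $x_i=0$ for some $i\in\mathcal{R}_u^{(l)}$ in the unmasked sample: then $z_u^{(l)}(\mathbf{x})$ is already $0$ through the same AND factor, so the identity in R\ref{requirement:2} still holds with both sides zero and needs no separate treatment. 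Finally, the smoothed $\tanh$-based surrogate used only during training is irrelevant here, since R\ref{requirement:1} and R\ref{requirement:2} concern the exact hard-indicator forward pass of \cref{eq:implementation1,eq:implementation2,eq:implementation3}.
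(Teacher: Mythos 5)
Your proof is correct and takes essentially the same route as the paper's: Requirement~R\ref{requirement:1} by induction over blocks using the union structure of \cref{eq:rf}, and Requirement~R\ref{requirement:2} by the dichotomy between $\mathcal{R}_u^{(l)}\subseteq S$ (reduced to R\ref{requirement:1}) and a masked variable inside $\mathcal{R}_u^{(l)}$ (killing the AND factor in \cref{eq:implementation2} and hence the unit). The only cosmetic differences are that you invoke the inductive hypothesis directly on the child whose receptive field contains the masked variable, whereas the paper propagates the zero bottom-up from a first-layer neuron through an explicit chain, and your explicit level-$0$ base case plus two-case split is a slightly cleaner organization of the paper's three-case enumeration.
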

This theorem proves that setting each neuron $z_u^{(l)}$ based on~\cref{eq:implementation1,eq:implementation2,eq:implementation3} can successfully encode an AND relationship between input variables in $\mathcal{R}^{(l)}_u$. In other words, only the input variables in the receptive field $\mathcal{R}_u^{(l)}$ can affect neural output $z^{(l)}_u(\mathbf{x})$, and masking any input variables in $\mathcal{R}_u^{(l)}$ will make $z_u^{(l)}(\mathbf{x}) = 0$. 
In particular, let us consider the inference on a masked sample $\mathbf{x}_S$ as an example. According to Theorem~\ref{thm:implementation}, the masked sample $\mathbf{x}_S$ is implemented by setting $\forall i\notin S, x_i=b_i$. Subsequently, this masked sample can exclusively activate all Harsanyi units subject to $\mathcal{R}_u^{(l)}\subseteq S$. All other Harsanyi units are not activated, \ie, $\forall \mathcal{R}_u^{(l)}\not\subseteq S, z_u^{(l)}(\mathbf{x}_S)=0$.

\begin{figure}[ht!]
\begin{center}
\centerline{\includegraphics[width=\columnwidth,trim={0cm 2cm 0cm 2cm},clip]{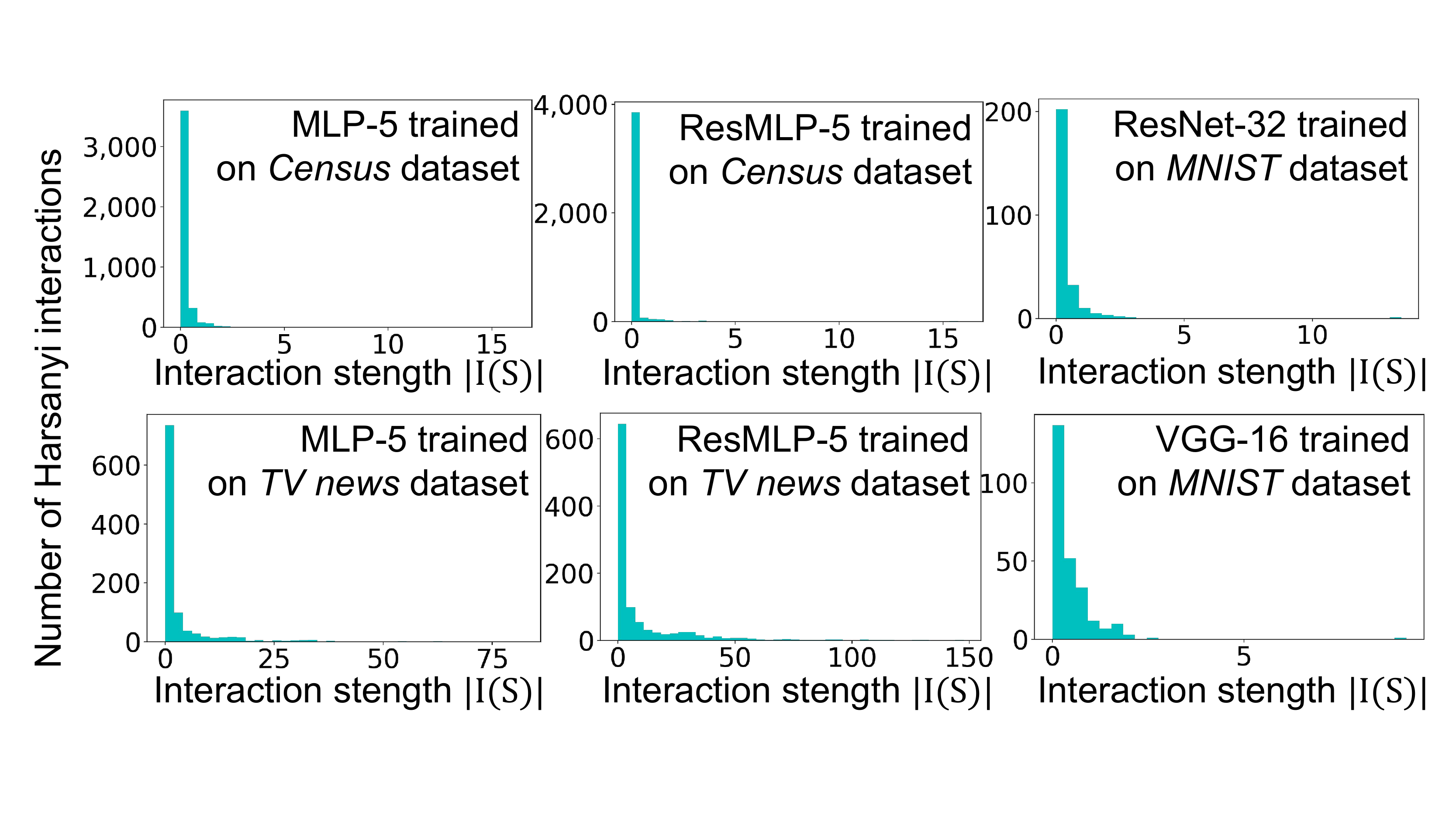}}
\vskip -0.05in
\caption{The histogram of interaction strength $|I(S)|$ of different Harsanyi interactions encoded by a DNN.}
\label{fig:sparsity}
\vskip -0.2in
\end{center}
\end{figure}
\subsection{Discussion on the sparsity of Harsanyi interactions}

Theoretically, a model can encode at most $2^n$ different Harsanyi interactions, but each AI model has its own limitation in encoding interactions, which are far less than $2^n$. The HarsanyiNet learns at most $M\!=\!\sum_{l=1}^Lm^{(l)}$ Harsanyi interactions, as proved in Lemma~\ref{thm:J(S)} and Theorem~\ref{thm:linearity}. Thus, the next question is whether the HarsanyiNet has sufficient representation capacity to handle real-world applications.

To this end, recent studies have observed~\cite{deng2021discovering, ren2023defining, li2023does} and mathematically proved~\cite{ren2023where} that traditional DNNs often encode only a few Harsanyi interactions in real-world applications, instead of learning all $2^n$ Harsanyi interactions. To be precise, the network output can be represented as 
\begin{equation}
    v(\mathbf{x})= \sum\nolimits_{S\in 2^N}I(S)= \sum\nolimits_{S\in \Omega}I(S) + \epsilon,
\end{equation}where {\small {$\Omega \!\subseteq \!2^N \!\!= \!\!\{S' \!\subseteq \!N\}$}} denotes a small set of Harsanyi interactions with considerable interaction strength $|I(S)|$. All other Harsanyi interactions have negligible interaction strength, \ie, $|I(S)|\approx 0$, which can be considered noisy inference patterns. {\small$\epsilon\!=\! \sum_{S'\in 2^N\setminus \Omega}I(S')$} is relatively small.

Furthermore, we conducted new experiments to verify the sparsity of Harsanyi interactions in DNNs. Given a trained network $v$ and an input sample $\mathbf{x}$, we computed the interaction strength $|I(S)|$ of all $2^n$ Harsanyi interactions \wrt all $S\subseteq N$. We followed~\cite{ren2023defining} to compute the interaction strength $|I(S)|$ of different Harsanyi interactions\footnote{For image data,~\citet{ren2023defining} computed Harsanyi interactions between randomly sampled image regions to reduce the computational cost.}. Please see Appendix~\ref{appendix:aog} for more details. Figure~\ref{fig:sparsity} shows the extracted Harsanyi interactions. Such experiments were conducted on various DNNs, including MLP, the residual MLP\footnote{We used 5-layer MLP (MLP-5) and 5-layer residual MLP (ResMLP-5) with 100 neurons in each hidden layer respectively.} used in~\citet{touvron2022resmlp}, the residual net with 32 layers (ResNet-32)~\cite{he2016deep} and the VGG net with 16 layers (VGG-16)~\cite{simonyan2013deep}, on the Census Income~\cite{Dua:2019}, the TV news commercial detection~\cite{Dua:2019}, and the MNIST~\cite{LeCun:2010} datasets. Only a few Harsanyi interactions were found to be salient. Most Harsanyi interactions were close to zero, and could be considered noise.

Therefore, the above experiments demonstrated that many applications only required DNNs to encode a few salient Harsanyi interactions, instead of modeling an exponential number of Harsanyi interactions. From this perspective, the HarsanyiNet has sufficient representation capacity. 
\section{Experiments}
\label{sec:experiment}
\subsection{Two types of HarsanyiNets} 
\label{sec:experimentsetting}
In the experiments, we constructed and tested two types of HarsanyiNets following the paradigm in~\cref{eq:implementation1,eq:implementation2,eq:implementation3},~\ie, the HarsanyiNet constructed with fully-connected layers, namely \textit{Harsanyi-MLP}, and the HarsanyiNet constructed with convolutional layers, namely \textit{Harsanyi-CNN}. The Harsanyi-MLP was suitable for handling tabular data, and the Harsanyi-CNN was designed for image data. 

$\bullet$ \textbf{Harsanyi-MLP} was designed as an extension of the MLP network. As mentioned at the beginning of Section~\ref{sec:construct}, we chose not to connect each neuron $(l,u)$ from the neurons in all $(l-1)$ previous blocks. Instead, we simply selected a set of children nodes $\mathcal{S}_u^{(l)}$  from the $(l-1)$-th block. Specifically, this was implemented by fixing all elements in $\boldsymbol{\tau}_u^{(l)}$ corresponding to all neurons in the 1st, 2nd,$\dots$,$(l-2)$-th blocks to 0.

$\bullet$ \textbf{Harsanyi-CNN} mainly used the following two specific settings to adapt convolutional layers into the paradigm of HarsanyiNet. 
\textbf{Setting 1.} Similar to the Harsanyi-MLP, the Harsanyi-CNN constructed the children set $\mathcal{S}_u^{(l)}$ of each neuron $(l,u)$ from the neurons in the $(l-1)$-th block. Let $C\times K\times K$ denote the tensor size of the convolutional kernel. As Figure~\ref{Fig:childrenset} shows, we selected children nodes $\mathcal{S}_u^{(l)}$ of the neuron $(l,u=(c,h,w))$ from neurons in the $C\times K\times K$ sub-tensor, which was clipped from the feature tensor of the $(l-1)$-th block and corresponded to the upper neuron $(l,u)$, where $c,h,w$ represent the location of the neuron $(l,u)$. Accordingly, we had $\boldsymbol{\tau}_u^{(l)}$ as a $CK^2$-dimensional vector.

\textbf{Setting 2.} Furthermore, we set all neurons $(l,u=(:,h,w))$ at the same location, but on different channels, to share the same children set $\mathcal{S}^{(l)}_{u=(:,h,w)}$ to reduce the number of parameters $\boldsymbol{\tau}_u^{(l)}$. This could be implemented by letting all neurons $(l,u=(1,h,w)),\dots,(l,u=(C,h,w))$ share the same parameter $\boldsymbol{\tau}_u^{(l)}$.
Based on the above design, we proved that all Harsanyi units $(l,u=(c,h,w))$ in the same location $(h,w)$ on different channels $(c=1,\dots,C)$ had the same receptive field $\mathcal{R}^{(l)}_{u=(:,h,w)}$ and contributed to the same Harsanyi interaction $I(S=\mathcal{R}^{(l)}_{u=(:,h,w)})$. Please see Appendix~\ref{appendix:Harsanyi-cnn} for the proof.
Therefore, we further considered neurons in the same location $(h,w)$ on different channels as a single Harsanyi unit. In this way,~\cref{eq:implementation2} could be rewritten as $h^{(l)}_u(\mathbf{x}) = g^{(l)}_u(\mathbf{x}) \cdot\prod_{(l-1,u')\in\mathcal{S}_u^{(l)}}\mathbbm{1}(\sum_{c=1}^C |z^{(l-1)}_{u'=(c,h,w)}(\mathbf{x})|\neq 0)$.

In the implementation, we first applied a convolutional layer, max-pooling layer, and ReLU layer on the input image to obtain the feature $\mathbf{z}^{(0)}$ in an intermediate layer. Subsequently, we regarded $\mathbf{z}^{(0)}$ as the input of the Harsanyi-CNN, instead of directly using raw pixel values as input variables. It was because using the ReLU operation enabled us to simply define $b_i=0$ as the baseline value (\ie, the masking state) for all the feature dimensions $\mathbf{z}^{(0)}$. In addition, according to Setting 2, we could consider the feature vector $\mathbf{z}^{(0)}_{(:,h,w)}$ at location $(h,w)$ as a single input variable, and we used $\mathbf{z}^{(0)}_{(:,h,w)}=\mathbf{0}$ to identify its masking state.

\subsection{Experiments and comparison}
\label{sec:experimentandcomparison}
\textbf{Dataset.}
We trained the Harsanyi-MLP on three tabular datasets from the UCI machine learning repository~\citep{Dua:2019}, including the Census Income dataset ($n=12$), the Yeast dataset ($n=8$) and the TV news commercial detection dataset ($n=10$), where $n$ denotes the number of input variables. For simplicity, these datasets were termed \textit{Census}, \textit{Yeast}, and  \textit{TV news}. We trained the Harsanyi-CNN on two image datasets: the MNIST dataset~\citep{LeCun:2010} and the CIFAR-10 dataset~\citep{krizhevsky2009learning}.

\textbf{Accuracy of Shapley values and computational cost.} We conducted experiments to verify whether the HarsanyiNet could compute accurate Shapley values in a single forward propagation. We evaluated both the accuracy and the time cost of calculating Shapley values. We computed the root mean squared error (RMSE) between the estimated Shapley values  $\boldsymbol{\phi}_\mathbf{x}$ and the true Shapley values, \ie, RMSE=$\mathbb{E}_\mathbf{x} [\frac{1}{\sqrt{n}}\vert\vert \boldsymbol{\phi}_\mathbf{x} - \boldsymbol{\phi}^*_\mathbf{x}\vert\vert]$, where the vector of ground-truth Shapley values $\boldsymbol{\phi}^*_\mathbf{x}$ on the sample $\mathbf{x}$ could be directly computed by following Definition~\ref{def:shapley} when $n\leq 16$. 

%$RMSE=$\sqrt{\mathbb{E}_\mathbf{x}[\vert\vert \boldsymbol{\phi}_\mathbf{x} - \boldsymbol{\phi}^*_\mathbf{x}\vert\vert]}$

%However, input samples in the MNIST and CIFAR-10 datasets contained much more input variables, so we used the sampling method~\cite{castro2009polynomial} \textcolor{blue}{to approximate the true Shapley value. We conducted network inferences on as many as }$2^{16}$ sampled masked instances $\mathbf{x}_S$, in order to \textcolor{blue}{obtain a relatively convincing approximation.}

\textbf{Comparing with approximation methods.} We compared the accuracy of Shapley values computed by the HarsanyiNet with those estimated by various approximation methods, including the sampling method~\citep{castro2009polynomial}, KernelSHAP~\citep{lundberg2017unified}, KernelSHAP with paired sampling (KernelSHAP-PS)~\citep{covert2021improving}, antithetical sampling~\citep{mitchell2022sampling}, DeepSHAP~\cite{lundberg2017unified} and FastSHAP~\citep{jethani2021fastshap}. The approximation methods also computed Shapley values on the HarsanyiNet for fair comparison. Figure~\ref{Fig:convergence_tabular} shows that many approximation methods generated more accurate Shapley values, when they conducted more inferences for approximation. The number of inferences was widely used~\cite{lundberg2017unified, ancona2019explaining} to quantify the computational cost of approximating Shapley values. These methods usually needed thousands of network inferences to compute the relatively accurate Shapley values. In comparison, the HarsanyiNet only needed one forward propagation to obtain the exact Shapley values (see ``$\star$'' in Figure~\ref{Fig:convergence_tabular}). DeepSHAP and FastSHAP could compute the Shapley values in one forward propagation, but as shown in Table~\ref{tab:fastanddeep}, the estimated errors of Shapley values were considerably larger than the HarsanyiNet.

\begin{table}[t]
\caption{Root mean squared errors of the estimated Shapley value and the classification accuracy of the DNN.}
\label{tab:tabularaccuracy}
\begin{center}
\begin{small}
\resizebox{\columnwidth}{!}{
\begin{tabular}{lccc}
\toprule
 & HarsanyiNet  & \makecell{Shallow \\ShapNet} & \makecell{Deep \\ShapNet\tablefootnote{The results were obtained using the codes released by the original paper~\cite{wang2021shapley}. In particular, for image datasets, each experiment was run for ten rounds with different random initialization, and the best result from the 10 runs was presented.}}\\
\midrule
\multicolumn{4}{c}{MNIST dataset}   \\
\midrule
Classification accuracy ($\uparrow$)& \textbf{99.16}   & 40.18 & 93.85   \\
Errors of Shapley values ($\downarrow$) & \textbf{1.19e-07} & 3.79e-07 &0.891\\
\midrule
\multicolumn{4}{c}{CIFAR-10 dataset}   \\
\midrule
Classification accuracy ($\uparrow$)& \textbf{89.34}   &20.48  & 73.51  \\
Errors of Shapley values ($\downarrow$) & \textbf{6.88e-08} &2.41e-07  &0.409\\
\midrule
\multicolumn{4}{c}{Census dataset}   \\
\midrule
Classification accuracy ($\uparrow$)    & 84.57 & 84.14 &  \textbf{84.72}\\
Errors of Shapley values ($\downarrow$) & \textbf{2.18e-08} & 5.12e-07  & 0.412\\
\midrule
\multicolumn{4}{c}{Yeast dataset}   \\
\midrule
Classification accuracy ($\uparrow$)& \textbf{59.91}  & 57.17 & 59.70   \\
Errors of Shapley values ($\downarrow$)& \textbf{3.36e-08} & 1.97e-07 & 0.127\\
\midrule
\multicolumn{4}{c}{TV news dataset}   \\
\midrule
Classification accuracy ($\uparrow$)& 82.20   & 79.72 & \textbf{82.46}   \\
Errors of Shapley values ($\downarrow$) &\textbf{5.69e-08} & 2.47e-07 &0.239\\
\bottomrule
\end{tabular}}
\end{small}
\end{center}
\vskip -0.2in
\end{table}
\begin{table}[t]
\caption{Error of the computed Shapley values on the Census, Yeast and TV news dataset.}
\vskip -0.1in
\label{tab:fastanddeep}
\begin{center}
\begin{small}
\resizebox{\columnwidth}{!}{
\begin{tabular}{lccc}
\toprule
\diagbox[width=10em]{Datasets}{Models} & HarsanyiNet  & DeepSHAP & FastSHAP\\
\midrule
Census & \textbf{2.18e-08} & 0.701 & 0.270\\
\midrule
Yeast & \textbf{3.36e-08} & 1.311 & 0.467 \\

\midrule
TV news &\textbf{5.69e-08} & 0.758 & 0.526\\
\bottomrule
\end{tabular}}
\end{small}
\end{center}
\vskip -0.2in
\end{table}

\begin{figure*}[h]
\centering
\includegraphics[width=0.95\linewidth,trim={3cm 4.5cm 5.2cm 10cm},clip]{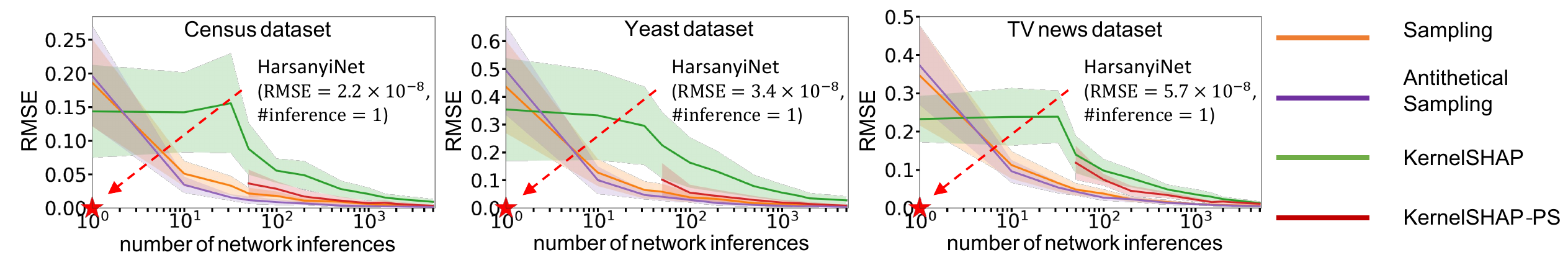}
\vskip -0.1in
\caption{Comparison of estimation errors and the computational cost (number of network inferences) required by different methods.}
\label{Fig:convergence_tabular}
\end{figure*}

\textbf{Comparing with the ShapNets.} Besides, we also compared the classification accuracy and the accuracy of Shapley values with two types of ShapNet~\cite{wang2021shapley}, namely \textit{Shallow ShapNet} and \textit{Deep ShapNet}. Input samples in the MNIST and CIFAR-10 datasets contained many more input variables. To calculate the ground-truth Shapley values through Definition~\ref{def:shapley}, we randomly sampled $n=12$ variables as input variables in the foreground of the sample $\mathbf{x}$. In this way, ground-truth Shapley values were computed by masking the selected 12 variables and keeping all the other variables as original values of these variables. Similarly, we could still use the Harsanyi-CNN and ShapNets to derive the Shapley value when we only considered $n=12$ input variables (please see Appendix~\ref{appendix:experiment12players} for details).

\begin{figure}[t!]
\centering
\vskip -0.1in
\includegraphics[width=1.0\linewidth]{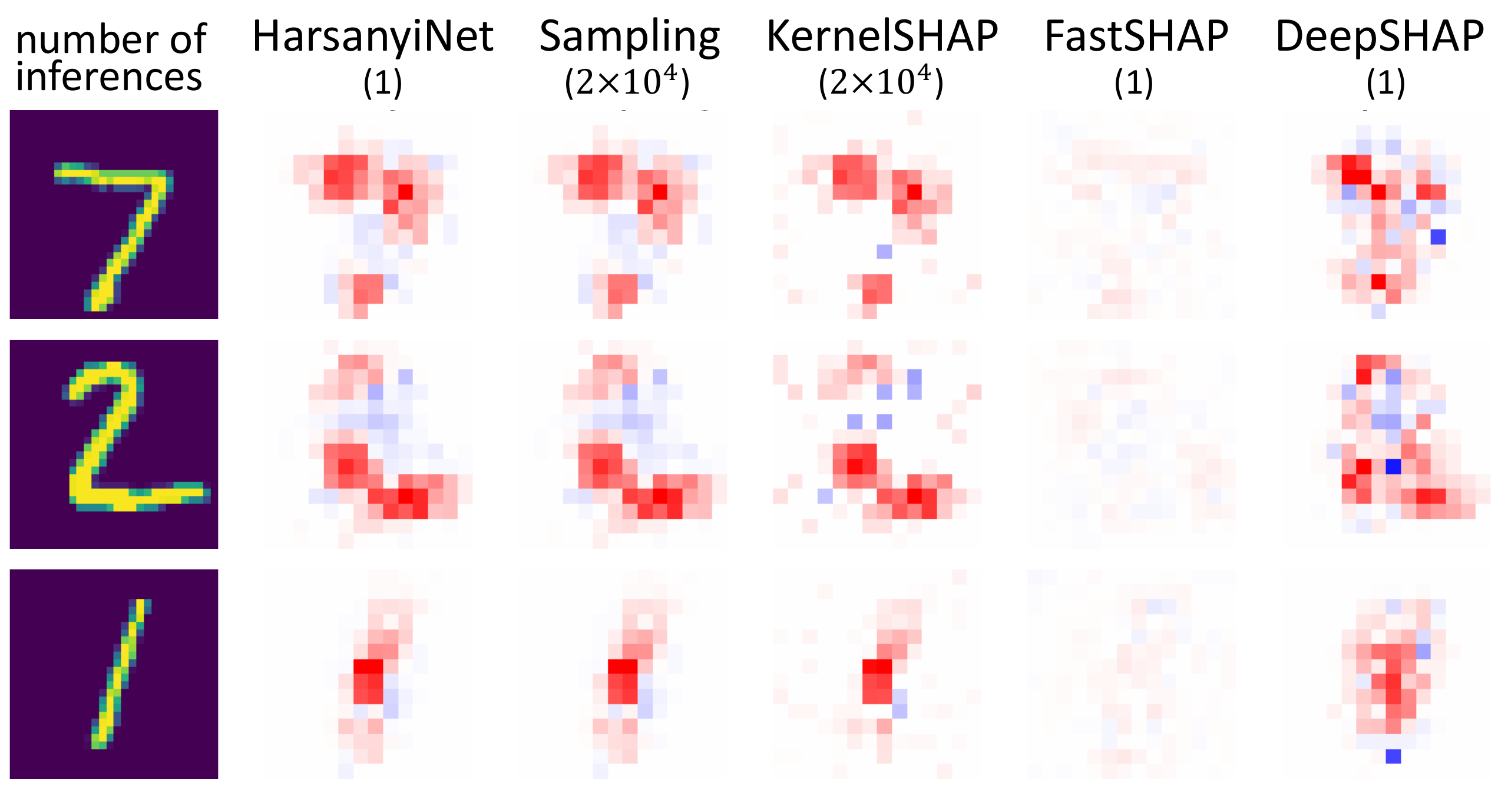}
\vskip -0.05in
\includegraphics[width=1.0\linewidth]{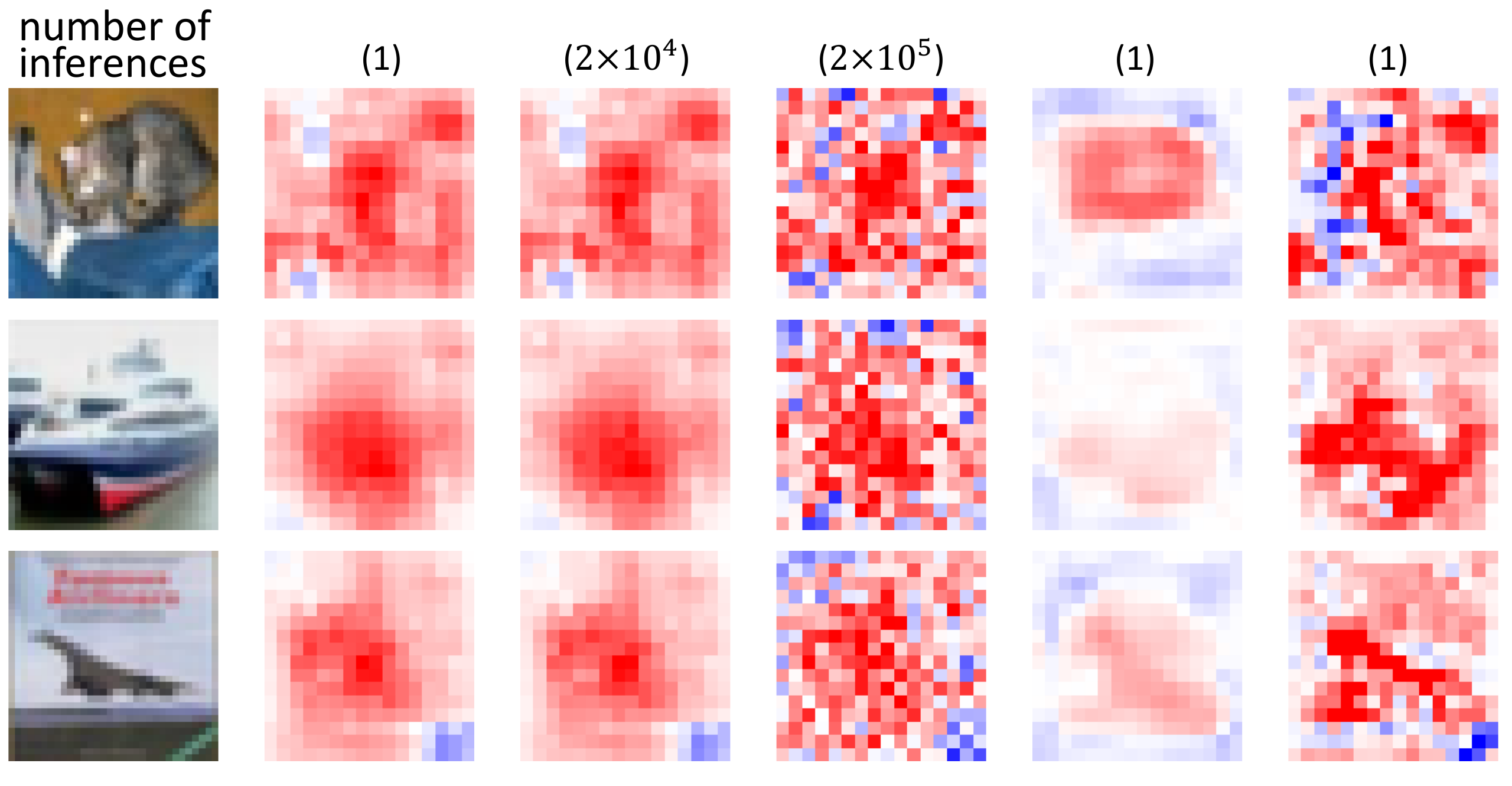}
\vskip -0.05in
\caption{Shapley values computed\footref{fn:fn1} by different methods. The number of inferences conducted for approximation is also shown.}
\vskip -0.15in
\label{Fig:cifar}
\end{figure}

Table~\ref{tab:tabularaccuracy} shows that both the HarsanyiNet and the Shallow ShapNet generated exact Shapley values with negligible errors, which were caused by unavoidable computational errors, but the HarsanyiNet had much higher classification accuracy than the Shallow ShapNet. This was because the representation capacity of the Shallow ShapNet was limited and could only encode interactions between a few input variables. On the other hand, the Deep ShapNet could not compute the exact Shapley values, although the Deep ShapNet achieved higher classification accuracy than the Shallow ShapNet. This was because the Deep ShapNet managed to encode interactions between more input variables, but the cost was that the Deep ShapNet could no longer theoretically guarantee the accuracy of the estimated Shapley values. Despite of this, the HarsanyiNet performed much better than the Deep ShapNet on more sophisticated tasks, such as image classification on the CIFAR-10 dataset.

\textbf{Visualization.} We generated attribution maps based on the Shapley values estimated by each method on the MNIST and CIFAR-10 datasets. As Figure~\ref{Fig:cifar} shows\footnote{To facilitate comparison with other methods, for the MNIST dataset, the Harsanyi-CNN was constructed with $4$ cascaded Harsanyi blocks, and each Harsanyi block had $32 \times 14 \times 14$ neurons, where $32$ is the number of channels. The hyperparameters were set to $\beta = 100$ and $\gamma = 0.05$, respectively. For the CIFAR-10 dataset, the Harsanyi-CNN was constructed with $10$ cascaded Harsanyi blocks, and each Harsanyi block had $256 \times 16 \times 16$ neurons, where $256$ is the number of channels. The hyperparameters were set to $\beta = 1000$ and $\gamma = 1$, respectively. Please see Appendix~\ref{appendix:attrimap} for more details.}, the attribution maps generated by the HarsanyiNet were almost the same as Shapley values, which were estimated by conducting inferences on 20000 sampled masked images and had converged to the true Shapley values. We also visualized the receptive fields of Harsanyi units on digit image in Figure~\ref{Fig:rf}. It verified that we could obtain the receptive field of a Harsanyi unit $z_u^{(l)}$ by merging receptive fields of its children nodes.

\begin{figure}[t!]
\centering
%\vskip 0.05in
\includegraphics[width=1.0\linewidth,trim={0cm 2cm 0cm 1cm}]{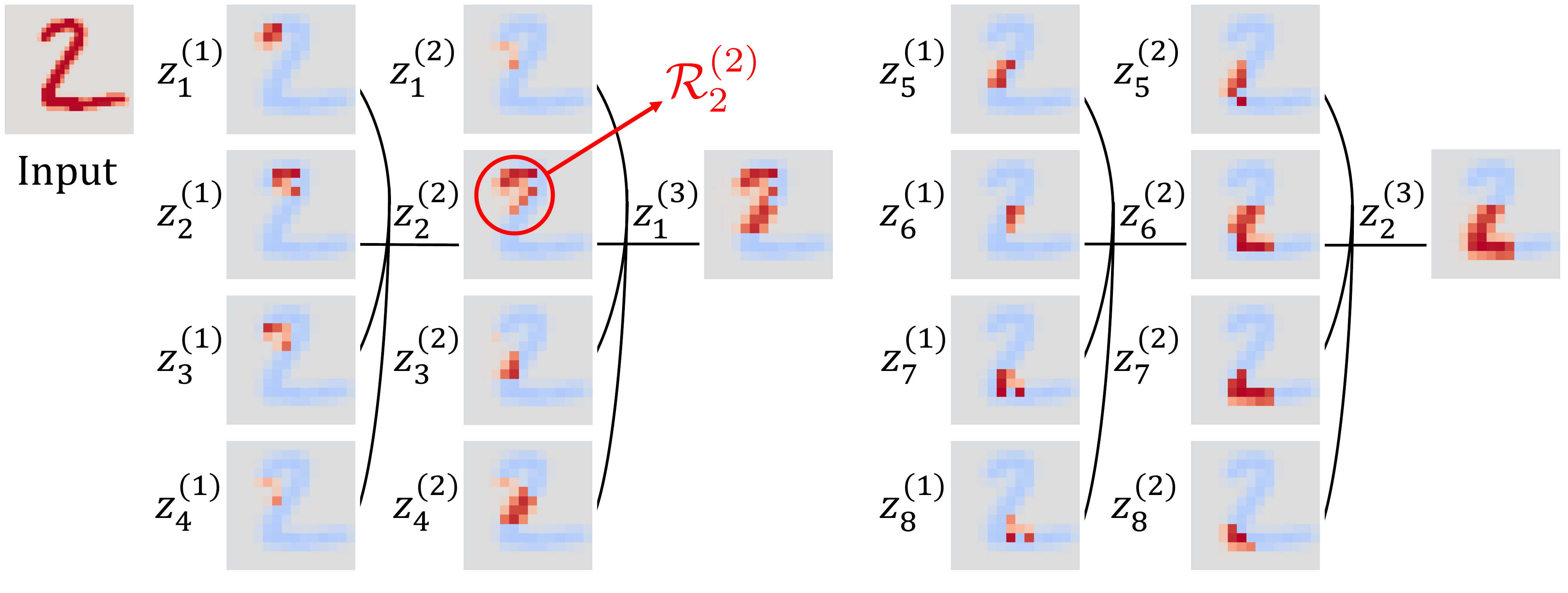}
%\vskip 0.2in
\caption{Visualization of the receptive field of Harsanyi units and the corresponding children nodes.}
\label{Fig:rf}
\vskip -0.15in
\end{figure}
\textbf{Implementation details.} The Harsanyi-MLP was constructed with 3 cascaded Harsanyi blocks, where each was formulated by following~\cref{eq:implementation1,eq:implementation2,eq:implementation3}, and each Harsanyi block had 100 neurons. The Harsanyi-CNN was constructed
with 10 cascaded Harsanyi blocks upon the feature $\mathbf{z}^{(0)}$, and each Harsanyi block had $512 \times 16\times16$ neurons, where 512 is the number of channels. The hyperparameters were set to $\beta = 10$ and $\gamma=100$ for Harsanyi-MLP trained on tabular data, and set $\beta=1000$ and $\gamma=1$ for Harsanyi-CNN trained on the image data respectively. For the Harsanyi-MLP, we randomly selected 10 neurons in the previous layer as the initial children set $\mathcal{S}^{(l)}_u$, and set the corresponding dimensions in $\boldsymbol{\tau}_u^{(l)}$ to 1. For all other neurons in the previous layer, their corresponding dimensions in $\boldsymbol{\tau}_u^{(l)}$ were initialized to $-1$. For the Harsanyi-CNN, we initialized each parameter $(\boldsymbol{\tau}_u^{(l)})_i \sim \mathcal{N}(0, 0.01^2)$, which randomly selected about half of the neurons in the previous layer to satisfy $(\boldsymbol{\tau}_u^{(l)})_i>0$ as the initial children set $\mathcal{S}^{(l)}_u$.

\textbf{Discussion on evaluation metrics for attributions.} Actually, many other metrics have been used to evaluate attribution methods, such as ROAR~\citep{hooker2019benchmark} and weakly-supervised object localization~\cite{zhou2016learning, schulz2020restricting}. As Table~\ref{tab:tabularaccuracy} and Figure~\ref{Fig:convergence_tabular} show that the HarsanyiNet generated the fully accurate Shapley values, the evaluation of the attribution generated by the HarsanyiNet should be the same as the Shapley values, and the performance of Shapley values had been sophisticatedly analyzed in previous studies~\cite{lundberg2017unified, chen2018shapley,wang2021shapley, jethani2021fastshap}. In particular, the Shapley value did not always perform the best in all evaluation metrics, although it was considered one of the most standard attribution methods and satisfied \textit{linearity}, \textit{dummy}, \textit{symmetry}, and \textit{efficiency} axioms.

\section{Conclusion}\label{sec:conclusion}
In this paper, we have proposed the HarsanyiNet that can simultaneously perform model inference and compute the exact Shapley values of input variables in a single forward propagation. We have theoretically proved and experimentally verified the accuracy of Shapley values computed by the HarsanyiNet. Only negligible errors at the level of {\small{$10^{-8}$} -- \small{$10^{-7}$}} were caused by unavoidable computational errors. Furthermore, we have demonstrated that the HarsanyiNet does not constrain the interactions between input variables, thereby exhibiting strong representation power.

\section*{Acknowledgements}
This work is partially supported by the National Nature Science Foundation of China (62276165), National Key R$\&$D Program of China (2021ZD0111602), Shanghai Natural Science Foundation (21JC1403800,21ZR1434600), National Nature Science Foundation of China (U19B2043), Shanghai Municipal Science and Technology Key Project (2021SHZDZX0102).

\bibliography{example_paper}
\bibliographystyle{icml2023}

%%%%%%%%%%%%%%%%%%%%%%%%%%%%%%%%%%%%%%%%%%%%%%%%%%%%%%%%%%%%%%%%%%%%%%%%%%%%%%%
%%%%%%%%%%%%%%%%%%%%%%%%%%%%%%%%%%%%%%%%%%%%%%%%%%%%%%%%%%%%%%%%%%%%%%%%%%%%%%%
% APPENDIX
%%%%%%%%%%%%%%%%%%%%%%%%%%%%%%%%%%%%%%%%%%%%%%%%%%%%%%%%%%%%%%%%%%%%%%%%%%%%%%%
%%%%%%%%%%%%%%%%%%%%%%%%%%%%%%%%%%%%%%%%%%%%%%%%%%%%%%%%%%%%%%%%%%%%%%%%%%%%%%%
\newpage
\appendix
\onecolumn
\setcounter{theorem}{1}
\section{The Shapley values} \label{sec:appendix_shapley}
In this section, we revisits the four axioms that the Shapley values satisfy, which ensures the Shapley values as relatively faithful attribution values. Let us consider the following cooperative game $V: 2^N \mapsto \mathbb{R}$, in which a set of $n$ players $N=\{1,2,\dots,n\}$ collaborate and win a reward $R$. Here, $V(S)$ is equivalent to $v(\mathbf{x}_S) - v(\mathbf{x}_\emptyset)$ mentioned in the paper, and we have $V(\emptyset)=0$.
%a DNN and an input sample $\mathbf{x}$ with $n$ variables $N=\{1,2,\dots,n\}$. The DNN can be understood as a game $v(\cdot)$, $v(\mathbf{x})$ can be defined as the $c$-th dimension of the network output, where $c$ usually denotes the ground-truth category of the input. 
\citet{young1985monotonic} proved that the Shapley value was the unique solution which satisfied the four axioms, including the \textit{linearity} axiom, \textit{dummy} axiom, \textit{symmetry} axiom and \textit{efficiency} axiom~\citep{weber1988probabilistic} .

(1) \textit{Linearity axiom:} If the game $V(\cdot)$ is a linear combination of two games $U(\cdot)$, $W(\cdot)$ for all $S\subseteq N$, \textit{i.e.} $V(S) = U(S) + W(S)$ and $(c\cdot V)(S) = c\cdot V(S), \forall c\in \mathbb{R}$, then the Shapley value in the game $V$ is also a linear combination of that in the games $U$ and $W$ , \textit{i.e.} $\forall i \in N, \phi^V(i) = \phi^U(i) + \phi^W(i)$ and $\phi^{c\cdot V}(i) = c\cdot\phi^V(i)$. 

(2) \textit{Dummy axiom:} A player $i$ is defined as a dummy player if $V({S\cup \{i\}}) = V(S) + V({\{i\}})$ for every $S\subseteq N\setminus\{i\}$. The dummy player $i$ satisfies $\phi(i) = V(\{i\})$, which indicates player $i$ influence the overall reward alone, without interacting/cooperating with other players in $N$.

(3) \textit{Symmetry axiom:} For two players $i$ and $j$, if $\forall S\subseteq N\setminus\{i,j\}$, $V({S\cup \{i\}}) = V({S\cup \{j\}})$, then the Shapley values of players $i$ and $j$ are equal, \textit{i.e.} $\phi(i) = \phi(j)$.

(4) \textit{Efficiency axiom:} The overall reward is equal to the sum of the Shapley value of each player, $\textit{i.e.} \sum_{i=1}^n \phi(i) = V(N)$.

\section{Proofs of Theorems}\label{appdix:proof}
\label{appendix:theorems}
In this section, we prove the theorems in the paper. 

\begin{theorem}
     Let a network output $v(\mathbf{x})\in \mathbb{R}$ be represented as $v(\mathbf{x}) = \sum\nolimits_{l=1}^{L}(\mathbf{w}_v^{(l)})^\intercal \mathbf{z}^{(l)}(\mathbf{x})$, according to~\cref{eq:efficiency}. In this way, the Harsanyi interaction between input variables in the set $S$ computed on the network output $v(\mathbf{x})$ can be represented as $I(S) = \sum\nolimits_{l=1}^{L}\sum\nolimits_{u=1}^{m^{(l)}}w_u^{(l)}J_u^{(l)}(S).$
\end{theorem}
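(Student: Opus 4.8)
The plan is to reduce the statement to the \emph{linearity} of the Harsanyi interaction in its underlying reward function, together with the observation that the skip-connection read-out of~\cref{eq:efficiency} holds on \emph{every} input fed to the network and hence on every masked sample, not only on the original $\mathbf{x}$.

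First I would associate to each Harsanyi unit $(l,u)$ its own cooperative game $V_u^{(l)}(S) \coloneqq z_u^{(l)}(\mathbf{x}_S) - z_u^{(l)}(\mathbf{x}_\emptyset)$, so that, by the defining convention in Lemma~\ref{thm:J(S)}, $J_u^{(l)}(S)$ is exactly the Harsanyi interaction of $V_u^{(l)}$. The key preliminary step is to show that the whole-network game decomposes linearly: since~\cref{eq:efficiency} reads $v(\cdot) = \sum_{l=1}^{L}\sum_{u=1}^{m^{(l)}} w_{v,u}^{(l)} z_u^{(l)}(\cdot)$ as a function of an arbitrary input, evaluating it at $\mathbf{x}_S$ and subtracting the value at $\mathbf{x}_\emptyset$ gives $V(S) = \sum_{l=1}^{L}\sum_{u=1}^{m^{(l)}} w_{v,u}^{(l)} V_u^{(l)}(S)$, where $w_{v,u}^{(l)} = (\mathbf{w}_v^{(l)})_u$.

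Next I would transport this linearity through the recursive definition of the Harsanyi dividend by induction on $|S|$. The base case is immediate, since $I(\emptyset) = 0 = \sum_{l,u} w_{v,u}^{(l)} J_u^{(l)}(\emptyset)$. For the inductive step I would use Definition~\ref{def:harsanyi} to write $I(S) = V(S) - \sum_{L \subsetneq S} I(L)$, substitute the linear decomposition of $V(S)$, and apply the induction hypothesis $I(L) = \sum_{l,u} w_{v,u}^{(l)} J_u^{(l)}(L)$ for every $L \subsetneq S$. Collecting the coefficient of each $w_{v,u}^{(l)}$ leaves the expression $V_u^{(l)}(S) - \sum_{L \subsetneq S} J_u^{(l)}(L)$, which is precisely $J_u^{(l)}(S)$ by the same recursive definition; this closes the induction and yields $I(S) = \sum_{l,u} w_{v,u}^{(l)} J_u^{(l)}(S)$. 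Alternatively, one could simply cite the linearity axiom of the Harsanyi dividend recalled after Definition~\ref{def:harsanyi} and bypass the induction entirely.

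The step I expect to be the real (if modest) obstacle is not the inductive algebra but the justification of the masked-sample decomposition $V(S) = \sum_{l,u} w_{v,u}^{(l)} V_u^{(l)}(S)$: one must make explicit that~\cref{eq:efficiency} is an identity of the network as a function of arbitrary inputs, so that it continues to hold when the input is the masked sample $\mathbf{x}_S$ rather than $\mathbf{x}$, and therefore transfers to the games $V$ and $V_u^{(l)}$. Granting this, linearity of the Harsanyi operator supplies the rest.
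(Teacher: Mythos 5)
Your proposal is correct and takes essentially the same route as the paper: both reduce the claim to the linearity of the Harsanyi interaction in its reward function, applied to the pointwise decomposition $v(\cdot)=\sum_{l=1}^{L}\sum_{u=1}^{m^{(l)}} w_{v,u}^{(l)}\, z_u^{(l)}(\cdot)$ evaluated on masked samples. The paper simply cites the linearity property and concludes in one step, whereas your induction on $|S|$ re-derives that property from the recursive definition (and your explicit remark that the read-out identity of~\cref{eq:efficiency} holds for every masked input $\mathbf{x}_S$ makes precise a step the paper leaves implicit).
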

\begin{proof}
    We have
    \begin{align*}
        v(\mathbf{x}) &= \sum\nolimits_{l=1}^{L}(\mathbf{w}_v^{(l)})^\intercal \mathbf{z}^{(l)}(\mathbf{x}) \\
        & = \sum\nolimits_{l=1}^{L}\sum\nolimits_{u=1}^{m^{(l)}}w_u^{(l)}z_u^{(l)}(\mathbf{x}). 
    \end{align*}

    According to the linearity property of the Harsanyi interactions, if $\forall S \subseteq N $, $v(\mathbf{x}_S) = u(\mathbf{x}_S) + w(\mathbf{x}_S)$ and $(cv)(\mathbf{x}_S) = c\cdot v(\mathbf{x}_S), \forall c\in \mathbb{R}$, then the Harsanyi interaction $I^v(S)$ is also a linear combination of $I^u(S)$ and $I^w(S)$, \ie, $\forall S \subseteq N $, $I^v(S) =I^u(S) + I^w(S) $ and $I^{(cv)}(S) = c\cdot I^v(S)$. Therefore, as $J^{(l)}_u(S)$ denotes the Harsanyi interaction computed on the function $z^{(l)}_u(\mathbf{x})$, we have the Harsanyi interaction computed on network output $v(\mathbf{x})$ a linear combination of $J^{(l)}_u(S)$, \ie,
    \begin{equation*}
        I(S) = \sum\nolimits_{l=1}^{L}\sum\nolimits_{u=1}^{m^{(l)}}w_u^{(l)}J_u^{(l)}(S).
    \end{equation*}
\end{proof}
\begin{theorem}[Deriving Shapley values from Harsanyi units in intermediate layers]
    The Shapley value $\phi(i)$ can be computed as
    \begin{equation*}
        \phi(i) = \sum\nolimits_{l=1}^{L}\sum\nolimits_{u=1}^{m^{(l)}} \frac{1}{|\mathcal{R}_u^{(l)}|} w_u^{(l)} z_u^{(l)}(\mathbf{x})\mathbbm{1}(\mathcal{R}_u^{(l)}\ni i).
        %\phi(i) = \!\!\!\!\!\sum_{S \subseteq N, S\ni i}\frac{1}{|S|}I(S) = \!\!\!\!\!\sum_{S \subseteq N, S\ni i}\frac{1}{|S|}\sum_uw_u J_u(S).
    \end{equation*}
\end{theorem}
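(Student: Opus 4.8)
The plan is to chain together the three preceding results by direct substitution, with the crux being that Lemma~\ref{thm:J(S)} collapses the exponential sum over subsets $S$ into a single surviving term per Harsanyi unit. Concretely, I would take \cref{eq:HarsanyiShapley} as the starting point and then successively eliminate $I(S)$ in favor of the unit-level interactions $J_u^{(l)}(S)$, and finally eliminate those in favor of the raw activations $z_u^{(l)}(\mathbf{x})$.

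First I would start from the Harsanyi redistribution of the Shapley value in Theorem~\ref{theorem1}, namely $\phi(i) = \sum_{S\subseteq N:\, S\ni i} \frac{1}{|S|} I(S)$, which expresses $\phi(i)$ purely through the interactions $I(S)$ of the whole network output $v(\mathbf{x})$. Next I would substitute the layerwise expansion from Theorem~\ref{thm:linearity}, $I(S) = \sum_{l=1}^{L}\sum_{u=1}^{m^{(l)}} w_{v,u}^{(l)} J_u^{(l)}(S)$, and then exchange the order of the (finite) summations so that the subset sum $\sum_{S\ni i}$ is pushed inside, sitting directly next to the factor $J_u^{(l)}(S)$. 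This leaves, for each $(l,u)$, an inner expression of the form $w_{v,u}^{(l)}\sum_{S\subseteq N:\, S\ni i} \frac{1}{|S|} J_u^{(l)}(S)$.

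The key step, and the only one that does real work, is evaluating this inner subset sum via Lemma~\ref{thm:J(S)}. Since $J_u^{(l)}(S)=0$ for every $S\neq \mathcal{R}_u^{(l)}$ and $J_u^{(l)}(\mathcal{R}_u^{(l)})=z_u^{(l)}(\mathbf{x})$, the entire sum over $S$ collapses to the single term indexed by $S=\mathcal{R}_u^{(l)}$. The membership constraint $S\ni i$ then turns into the constraint $\mathcal{R}_u^{(l)}\ni i$, which I would record with the indicator $\mathbbm{1}(\mathcal{R}_u^{(l)}\ni i)$, while the weight $\frac{1}{|S|}$ becomes $\frac{1}{|\mathcal{R}_u^{(l)}|}$. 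Reassembling the outer sums over $l$ and $u$ then yields precisely \cref{eq:harsanyiunitshapley}.

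I do not anticipate any genuine obstacle: the argument is a substitution, a swap of finite sums, and the collapse supplied by Lemma~\ref{thm:J(S)}. The only point demanding care is bookkeeping---ensuring that the interchange of summations is justified (it is, as all sums are finite) and that the condition $S\ni i$ is transferred cleanly onto the single surviving index as the indicator $\mathbbm{1}(\mathcal{R}_u^{(l)}\ni i)$ rather than silently dropped, since this indicator is exactly what encodes that a Harsanyi unit contributes to $\phi(i)$ only when $i$ lies in its receptive field.
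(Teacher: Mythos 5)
Your proposal is correct and follows essentially the same route as the paper's own proof: start from Theorem~\ref{theorem1}, substitute the layerwise expansion of $I(S)$ from Theorem~\ref{thm:linearity}, and collapse the subset sum via Lemma~\ref{thm:J(S)} so that only $S=\mathcal{R}_u^{(l)}$ survives, yielding the factor $\frac{1}{|\mathcal{R}_u^{(l)}|}z_u^{(l)}(\mathbf{x})\mathbbm{1}(\mathcal{R}_u^{(l)}\ni i)$. You merely spell out the finite-sum interchange and indicator bookkeeping that the paper leaves implicit in its three-line chain of equalities.
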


\begin{proof}
    According to Theorem~\ref{theorem1} and Theorem~\ref{thm:linearity}, we have
    \begin{align*}
        \phi(i) &= \sum\nolimits_{S\subseteq N: S \ni i} \frac{1}{|S|}I(S) \\
        &=\sum\nolimits_{S\subseteq N}\frac{1}{|S|}I(S)\mathbbm{1}(S\ni i) \\
        &=\sum_{l=1}^{L}\sum_{u=1}^{m^{(l)}} \frac{1}{|\mathcal{R}_u^{(l)}|} w_u^{(l)} z_u^{(l)}(\mathbf{x})\mathbbm{1}(\mathcal{R}_u^{(l)}\ni i).
    \end{align*}
    
\end{proof}

\begin{theorem}
Based on~\cref{eq:implementation1,eq:implementation2,eq:implementation3}, the receptive field $\mathcal{R}^{(l)}_u$ of the neuron $z_u^{(l)}$ automatically satisfies the Requirement~\ref{requirement:1} and~\ref{requirement:2}. The receptive field $\mathcal{R}^{(l)}_u$ of a neuron $(l,u)$ is defined recursively by $
        \mathcal{R}^{(l)}_u := \cup_{(l',u')\in \mathcal{S}^{(l)}_{u}}\mathcal{R}^{(l')}_{u'}, \quad \st~\mathcal{R}^{(1)}_u := \mathcal{S}^{(1)}_u.$
\end{theorem}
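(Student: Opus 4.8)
The plan is to prove both requirements simultaneously by induction on the block index $l$, exploiting the recursive definition $\mathcal{R}^{(l)}_u = \cup_{(l',u')\in \mathcal{S}^{(l)}_{u}}\mathcal{R}^{(l')}_{u'}$ together with the structural fact that the computation of $z_u^{(l)}(\mathbf{x})$ in \cref{eq:implementation1,eq:implementation2,eq:implementation3} only ever touches the activations of the children in $\mathcal{S}^{(l)}_u$. I would state the induction hypothesis as the full masking identity of Requirement~\ref{requirement:2}, since, as explained below, its ``all variables present'' half is what drives the recursion.

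First I would establish Requirement~\ref{requirement:1}. The matrix $\mathbf{\Sigma}_u^{(l)}$ zeroes out every coordinate of $\mathbbm{z}^{(l-1)}$ outside $\mathcal{S}^{(l)}_u$, so $g^{(l)}_u(\mathbf{x})$ is a function of the children activations $\{z^{(l')}_{u'}(\mathbf{x}): (l',u')\in \mathcal{S}^{(l)}_u\}$ alone; the indicators multiplying it in \cref{eq:implementation2} and the final ReLU in \cref{eq:implementation3} are functions of these same children. By the induction hypothesis each child $z^{(l')}_{u'}(\mathbf{x})$ depends only on the input variables in $\mathcal{R}^{(l')}_{u'}$, hence $z_u^{(l)}(\mathbf{x})$ depends only on $\cup_{(l',u')}\mathcal{R}^{(l')}_{u'} = \mathcal{R}^{(l)}_u$, which is exactly Requirement~\ref{requirement:1}. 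The base case $l=1$ holds because $g^{(1)}_u$ and its AND indicators read only the input coordinates in $\mathcal{S}^{(1)}_u = \mathcal{R}^{(1)}_u$.

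Next I would prove the identity $z_u^{(l)}(\mathbf{x}_S) = z_u^{(l)}(\mathbf{x})\cdot \prod_{i\in \mathcal{R}_u^{(l)}}\mathbbm{1}(i \in S)$ of Requirement~\ref{requirement:2}, again by induction. If $\mathcal{R}^{(l)}_u\subseteq S$, every child satisfies $\mathcal{R}^{(l')}_{u'}\subseteq S$, so by the induction hypothesis each child activation is unchanged between $\mathbf{x}_S$ and $\mathbf{x}$; therefore $g^{(l)}_u$, all AND indicators, and hence $z_u^{(l)}$ agree on $\mathbf{x}_S$ and $\mathbf{x}$, matching the right-hand side. If instead some $i\in\mathcal{R}^{(l)}_u$ is masked ($i\notin S$), then by the union structure $i\in\mathcal{R}^{(l')}_{u'}$ for some child $(l',u')\in\mathcal{S}^{(l)}_u$; the induction hypothesis gives $z^{(l')}_{u'}(\mathbf{x}_S)=0$, so the factor $\mathbbm{1}(z^{(l')}_{u'}(\mathbf{x}_S)\neq 0)$ in \cref{eq:implementation2} vanishes, forcing $h^{(l)}_u(\mathbf{x}_S)=0$ and thus $z_u^{(l)}(\mathbf{x}_S)=\mathrm{ReLU}(0)=0$, again matching the right-hand side. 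The base case $l=1$ is where the baseline design enters: masking $i$ sets $x_i=b_i$, so $z^{(0)}_i = x_i-b_i = 0$ and the corresponding indicator in the AND layer is $0$, killing the unit, while if no variable of $\mathcal{R}^{(1)}_u$ is masked the unit reads identical inputs and is unchanged.

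The routine verifications (that each operation reads only children coordinates, and that ReLU and the indicators preserve both cases) are mechanical; the one point that requires care — and the real crux — is the propagation step in Requirement~\ref{requirement:2}. The multiplicative AND gate in \cref{eq:implementation2} is precisely what guarantees that a single deactivated child ($z^{(l')}_{u'}=0$) zeroes the parent, so that masking any single variable anywhere in the receptive field deactivates the whole unit. For this reason I would keep the induction hypothesis stated as the complete masking identity rather than merely ``masked $\Rightarrow$ zero'', because the complementary ``all present $\Rightarrow$ unchanged'' direction is what lets a child's activation be reused verbatim by its parent and thereby closes the recursion.
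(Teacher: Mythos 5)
Your proof is correct and follows essentially the same route as the paper's: an induction over blocks that establishes R\ref{requirement:1} from the fact that $\mathbf{\Sigma}_u^{(l)}$ restricts each unit to its children, followed by the two-case analysis of R\ref{requirement:2} in which the ``all present'' case reuses the invariance argument and the ``some variable masked'' case propagates a zero child through the AND gate of \cref{eq:implementation2}. The only cosmetic difference is packaging: you carry the full masking identity as a single induction hypothesis and push the zero top-down from a child containing the masked variable, whereas the paper proves R\ref{requirement:1} separately, invokes it for the $S\supseteq\mathcal{R}_u^{(l)}$ case, and runs the zero-propagation bottom-up from the first layer --- the same recursion read in the opposite direction.
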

\begin{proof}

\textbf{(1) Proof of the receptive field $\mathcal{R}^{(l)}_u$ of the neuron $z_u^{(l)}$ satisfies the Requirement~\ref{requirement:1}}.

Given two arbitrary samples $\tilde{\mathbf{x}}=\tilde{\mathbf{z}}^{(0)}$ and $\mathbf{x}=\mathbf{z}^{(0)}$, to satisfy the Requirement~\ref{requirement:1}, we will prove that if $~\forall~i\in \mathcal{R}_u^{(l)}, \tilde{\mathbf{x}}_i=\mathbf{x}_i$, then $z_u^{(l)}(\tilde{\mathbf{x}}) = z_u^{(l)}(\mathbf{x})$.

\textbf{Firstly}, for the first layer, $\forall u', \mathcal{R}_{u'}^{(1)} = \mathcal{S}^{(1)}_{u'} \subseteq \mathcal{R}_{u}^{(l)}$, we prove $ z_{u'}^{(1)}(\tilde{\mathbf{x}}) = z_{u'}^{(1)}(\mathbf{x})$. 

We get 
$g^{(1)}_{u'}(\tilde{\mathbf{x}}) =(\mathbf{A}^{(1)}_{u'})^\intercal \cdot\left(\mathbf{\Sigma}_{u'}^{(1)}\cdot \tilde{\mathbf{z}}^{(0)}\right) = (\mathbf{A}^{(1)}_{u'})^\intercal \cdot \boldsymbol{\zeta}$, 
where $\forall i\in \mathcal{R}^{(1)}_{u'} = \mathcal{S}_{u'}^{(1)} \subseteq \mathcal{R}_{u}^{(l)}, \boldsymbol{\zeta}_i = \tilde{\mathbf{z}}^{(0)}_i$, otherwise $\boldsymbol{\zeta}_i = 0$. We also get
$g^{(1)}_{u'}(\mathbf{x}) =(\mathbf{A}^{(1)}_{u'})^\intercal \cdot\left(\mathbf{\Sigma}_{u'}^{(1)}\cdot \mathbf{z}^{(0)}\right) = (\mathbf{A}^{(1)}_{u'})^\intercal \cdot \boldsymbol{\eta}$, where $\forall i\in \mathcal{R}^{(1)}_{u'}, \boldsymbol{\eta}_i = \mathbf{z}^{(0)}_i = \tilde{\mathbf{z}}^{(0)}_i$, otherwise $\boldsymbol{\eta}_i = 0$. 

Thus, $g_{u'}^{(1)}(\tilde{\mathbf{x}}) = g_{u'}^{(1)}(\mathbf{x})$ and $z_{u'}^{(1)}(\tilde{\mathbf{x}}) = z_{u'}^{(1)}(\mathbf{x})$.

\textbf{Secondly}, we prove $z_u^{(l)}(\tilde{\mathbf{x}}) = z_u^{(l)}(\mathbf{x})$ using the above conclusion.

For the second layer, $\forall u', \mathcal{R}^{(2)}_{u'} = \cup_{(1,u'')\in \mathcal{S}^{(2)}_{u'}}\mathcal{R}^{(1)}_{u''} \subseteq \mathcal{R}_{u}^{(l)}$, we can get $ z_{u'}^{(2)}(\tilde{\mathbf{x}}) = z_{u'}^{(2)}(\mathbf{x})$ easily, since its children nodes is selected from $\forall u'', \mathcal{R}_{u''}^{(1)} = \mathcal{S}^{(1)}_{u''} \subseteq \mathcal{R}_{u}^{(l)}$, and the output of which satisfies $ z_{u''}^{(1)}(\tilde{\mathbf{x}}) = z_{u''}^{(1)}(\mathbf{x})$. Similarly, we can derive $z_u^{(l)}(\tilde{\mathbf{x}}) = z_u^{(l)}(\mathbf{x})$ recursively. 

In this way, we have proved that the receptive field $\mathcal{R}^{(l)}_u$ of the neuron $z_u^{(l)}$ satisfies the Requirement~\ref{requirement:1}.

\textbf{(2) Proof of the receptive field $\mathcal{R}^{(l)}_u$ of the neuron $z_u^{(l)}$ satisfies the Requirement~\ref{requirement:2}}.

Given a sample $\mathbf{x}=\mathbf{z}^{(0)}$ and its arbitrary masked sample $\mathbf{x}_S=\mathbf{z}_S^{(0)}$, to satisfy the Requirement~\ref{requirement:2}, we will prove that $z_u^{(l)}(\mathbf{x}_S) = z_u^{(l)}(\mathbf{x})\cdot \prod_{i\in \mathcal{R}_u^{(l)}}\mathbbm{1}(i \in S)$. Specifically, we will prove that under the conditions of (1) $\forall S\supseteq \mathcal{R}_u^{(l)}$, (2) $\forall S\subsetneq \mathcal{R}_u^{(l)}$, or $\forall S, S \cup \mathcal{R}_u^{(l)}\neq S$ and $S \cup \mathcal{R}_u^{(l)}\neq \mathcal{R}_u^{(l)}$, we can get $z_u^{(l)}(\mathbf{x}_S) = z_u^{(l)}(\mathbf{x})\cdot \prod_{i\in \mathcal{R}_u^{(l)}}\mathbbm{1}(i \in S)$, respectively.

\textbf{Firstly}, we can easily get $\forall S\supseteq \mathcal{R}_u^{(l)}, z_u^{(l)}(\mathbf{x}_S) = z_u^{(l)}(\mathbf{x})\cdot \prod_{i\in \mathcal{R}_u^{(l)}}\mathbbm{1}(i \in S)$. Since $\forall i\in \mathcal{R}_u^{(l)}, (\mathbf{x}_S)_i=\mathbf{x}_i$, let us use the proven conclusion of (1) to derive $z_u^{(l)}(\mathbf{x}_S) = z_u^{(l)}(\mathbf{x}) = z_u^{(l)}(\mathbf{x})\cdot \prod_{i\in \mathcal{R}_u^{(l)}}\mathbbm{1}(i \in S)$.

\textbf{Secondly}, we prove that under the conditions of $\forall S\subsetneq \mathcal{R}_u^{(l)}$, or $\forall S, S \cup \mathcal{R}_u^{(l)}\neq S$ and $S \cup \mathcal{R}_u^{(l)}\neq \mathcal{R}_u^{(l)}$, we can get $z_u^{(l)}(\mathbf{x}_S) = z_u^{(l)}(\mathbf{x})\cdot \prod_{i\in \mathcal{R}_u^{(l)}}\mathbbm{1}(i \in S)$. 

Let $\mathbf{x}_S$ denote the sample obtained by masking variables with $\mathbf{b}$ in the set $N\setminus S$ in the sample $\mathbf{x}$, then $\mathbf{z}^{(0)}=\mathbf{x}-\mathbf{b} \in \mathbb{R}^n$. In both settings, there exists at least a variable $j$ that belongs to $\mathcal{R}_u^{(l)}$ but not to $S$, \ie, $\exists j\in \mathcal{R}_u^{(l)}, j\notin S$, we have $(\mathbf{x}_S)_j = b$,  $(\mathbf{z}^{(0)}_S)_j =  0$ and $\prod_{i\in \mathcal{R}_u^{(l)}}\mathbbm{1}(i \in S)=0$. 

For the first layer, there exists at least a neuron $(1, u')$ which satisfies $j\in \mathcal{R}_{u'}^{(1)} = \mathcal{S}^{(1)}_{u'} \subseteq \mathcal{R}_{u}^{(l)}$. Then $\forall u', h^{(1)}_{u'}(\mathbf{x}_S) = g^{(1)}_{u'}(\mathbf{x}_S) \cdot\prod_{(0,u'')\in\mathcal{S}_{u'}^{(1)}}\mathbbm{1}(z^{(0)}_{u''}(\mathbf{x}_S)\neq 0) = g^{(1)}_{u'}(\mathbf{x}_S) \cdot \mathbbm{1}((\mathbf{z}^{(0)}_S)_j \neq 0) =0$ and $z^{(1)}_{u'}(\mathbf{x}_S) = 0$. Since $j \in \mathcal{R}^{(l)}_u = \cup_{(l',u'')\in \mathcal{S}^{(l)}_{u}}\mathcal{R}^{(l')}_{u''}$, there exists at least a neuron $(1, u')$ will affect the neuron $(l, u)$ recursively, \ie, $h^{(l)}_{u}(\mathbf{x}_S) = 0$ and $z^{(l)}_{u}(\mathbf{x}_S) = 0$. Thus, $z_u^{(l)}(\mathbf{x}_S) = z_u^{(l)}(\mathbf{x})\cdot \prod_{i\in \mathcal{R}_u^{(l)}}\mathbbm{1}(i \in S) = 0$.

In this way, we have proved that the receptive field $\mathcal{R}^{(l)}_u$ of the neuron $z_u^{(l)}$ satisfies the Requirement~\ref{requirement:2}.

\end{proof}

\section{Proof of Lemma~\ref{thm:J(S)}}
\label{appendix:lemma}
\setcounter{lemma}{0}
\begin{lemma}[Harsanyi interaction of a Harsanyi unit]
    Let us consider the output of a Harsanyi unit $z_u^{(l)}(\mathbf{x})$ as the reward. Then, let $J_u^{(l)}(S)$ denote the Harsanyi interaction \wrt~the function $z_u^{(l)}(\mathbf{x})$. Then, we have $J_u^{(l)}(\mathcal{R}_u^{(l)})=z_u^{(l)}(\mathbf{x})$, and $\forall S\neq \mathcal{R}_u^{(l)}, J_u^{(l)}(S) =0$, according to Requirements~\ref{requirement:1} and~\ref{requirement:2}.
\end{lemma}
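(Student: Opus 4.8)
The plan is to reduce the reward function associated with the Harsanyi unit to a scaled indicator (unanimity) game and then read off its dividends by induction on the size of the coalition. Following Definition~\ref{def:harsanyi} applied to the reward $z_u^{(l)}(\mathbf{x})$, I write $V_u^{(l)}(S) := z_u^{(l)}(\mathbf{x}_S) - z_u^{(l)}(\mathbf{x}_\emptyset)$, so that $J_u^{(l)}(S) = V_u^{(l)}(S) - \sum_{L \subsetneq S} J_u^{(l)}(L)$ with $J_u^{(l)}(\emptyset)=0$. First I would use Requirement~\ref{requirement:2} (well-definedness being guaranteed by Requirement~\ref{requirement:1}) to simplify $V_u^{(l)}$. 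Since $\prod_{i\in\mathcal{R}_u^{(l)}}\mathbbm{1}(i\in S)=\mathbbm{1}(\mathcal{R}_u^{(l)}\subseteq S)$, R\ref{requirement:2} gives $z_u^{(l)}(\mathbf{x}_S)=z_u^{(l)}(\mathbf{x})\cdot\mathbbm{1}(\mathcal{R}_u^{(l)}\subseteq S)$; in particular, taking $S=\emptyset$ and using that the receptive field is non-empty yields $z_u^{(l)}(\mathbf{x}_\emptyset)=0$. Hence $V_u^{(l)}(S)=z_u^{(l)}(\mathbf{x})\cdot\mathbbm{1}(\mathcal{R}_u^{(l)}\subseteq S)$, i.e.~the reward is a unanimity game carried by $\mathcal{R}_u^{(l)}$, scaled by the constant $z_u^{(l)}(\mathbf{x})$.

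With this form in hand, I would prove the claimed identity $J_u^{(l)}(S)=z_u^{(l)}(\mathbf{x})\cdot\mathbbm{1}(S=\mathcal{R}_u^{(l)})$ by induction on $|S|$, distinguishing three cases. If $\mathcal{R}_u^{(l)}\not\subseteq S$, then $\mathcal{R}_u^{(l)}\not\subseteq L$ for every $L\subseteq S$, so no subset of $S$ equals $\mathcal{R}_u^{(l)}$; the inductive hypothesis makes every term in the recursion vanish while $V_u^{(l)}(S)=0$, giving $J_u^{(l)}(S)=0$. If $S=\mathcal{R}_u^{(l)}$, all proper subsets $L\subsetneq S$ differ from $\mathcal{R}_u^{(l)}$, so their dividends vanish by induction and $J_u^{(l)}(S)=V_u^{(l)}(S)=z_u^{(l)}(\mathbf{x})$. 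Finally, if $\mathcal{R}_u^{(l)}\subsetneq S$, exactly one proper subset, namely $L=\mathcal{R}_u^{(l)}$, carries a non-zero dividend $z_u^{(l)}(\mathbf{x})$ by induction, while $V_u^{(l)}(S)=z_u^{(l)}(\mathbf{x})$, so the two cancel and $J_u^{(l)}(S)=0$. Collecting the three cases yields precisely $J_u^{(l)}(\mathcal{R}_u^{(l)})=z_u^{(l)}(\mathbf{x})$ and $J_u^{(l)}(S)=0$ for $S\neq\mathcal{R}_u^{(l)}$.

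An equivalent and slightly shorter route would be to verify directly that the candidate dividends satisfy the summation form $V_u^{(l)}(S)=\sum_{L\subseteq S}J_u^{(l)}(L)$ equivalent to Definition~\ref{def:harsanyi}, since $\sum_{L\subseteq S}z_u^{(l)}(\mathbf{x})\,\mathbbm{1}(L=\mathcal{R}_u^{(l)})=z_u^{(l)}(\mathbf{x})\,\mathbbm{1}(\mathcal{R}_u^{(l)}\subseteq S)$, and then invoke the uniqueness of the Harsanyi dividends determined by this relation. The only genuinely substantive step is the first one: recognizing, via R\ref{requirement:1} and R\ref{requirement:2}, that the Harsanyi unit realizes a pure AND (unanimity) game on its receptive field; after that, the dividend computation is the standard fact that a unanimity game on $\mathcal{R}_u^{(l)}$ has a single non-zero dividend, located at $\mathcal{R}_u^{(l)}$. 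I expect the only point needing care is the boundary bookkeeping---confirming $z_u^{(l)}(\mathbf{x}_\emptyset)=0$ so that no spurious additive constant enters $V_u^{(l)}$, and noting that the degenerate case $\mathcal{R}_u^{(l)}=\emptyset$ does not arise for the units constructed via~\cref{eq:rf}.
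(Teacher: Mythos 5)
Your proof is correct, and its core is the same argument the paper runs: rewrite $z_u^{(l)}(\mathbf{x}_S)$ via R\ref{requirement:2} and then induct on $|S|$ with a case split on how $S$ relates to $\mathcal{R}_u^{(l)}$. A few points distinguish your write-up, mostly to its advantage. First, by collapsing the product $\prod_{i\in\mathcal{R}_u^{(l)}}\mathbbm{1}(i\in S)$ into the single indicator $\mathbbm{1}(\mathcal{R}_u^{(l)}\subseteq S)$, you need only three cases ($\mathcal{R}_u^{(l)}\not\subseteq S$, $S=\mathcal{R}_u^{(l)}$, $\mathcal{R}_u^{(l)}\subsetneq S$) inside one induction, whereas the paper treats $S\subsetneq\mathcal{R}_u^{(l)}$, $S\supsetneq\mathcal{R}_u^{(l)}$, and the incomparable case as separate inductive arguments, relegating some of the subset bookkeeping to parenthetical ``similarly by induction'' remarks; your merged treatment is tighter and covers the same ground. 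Second, you explicitly verify $z_u^{(l)}(\mathbf{x}_\emptyset)=0$ (for a non-empty receptive field) before identifying $V_u^{(l)}(S)$ with a scaled unanimity game; the paper instead writes $J_u^{(l)}(S)=z_u^{(l)}(\mathbf{x}_S)-\sum_{L\subsetneq S}J_u^{(l)}(L)$ directly, silently dropping the $-z_u^{(l)}(\mathbf{x}_\emptyset)$ term from Definition~\ref{def:harsanyi}, so your bookkeeping closes a small (harmless but real) gap. Your flagged degenerate case $\mathcal{R}_u^{(l)}=\emptyset$ is also a genuine caveat: there all dividends vanish while $z_u^{(l)}(\mathbf{x})$ need not, so the lemma's first claim would fail, and excluding it via the construction in the paper's receptive-field recursion is the right move. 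Finally, your alternative route---verifying that the candidate dividends satisfy $V_u^{(l)}(S)=\sum_{L\subseteq S}J_u^{(l)}(L)$ and invoking uniqueness of the dividend (M\"{o}bius) representation---is a genuinely different and shorter argument absent from the paper; it buys brevity and conceptual clarity (the unit \emph{is} a unanimity game) at the cost of citing or proving that uniqueness, whereas the inductive route is fully self-contained.
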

\begin{proof}
    According to Definition~\ref{def:harsanyi}, \ie, $I(S) = v(S) - \sum_{L\subsetneq S}I(L)$ subject to $I(\emptyset) \coloneqq 0$, and Requirements~\ref{requirement:2}, \ie, $z_u^{(l)}(\mathbf{x}_S) = z_u^{(l)}(\mathbf{x})\cdot \prod_{i\in \mathcal{R}_u^{(l)}}\mathbbm{1}(i \in S)$, the Harsanyi interaction of a Harsanyi unit can be written as, 
    \begin{align*}
        J_u^{(l)}(S) &=z_u^{(l)}(\mathbf{x}_S)-\sum_{L\subsetneq S}J_u^{(l)}(L) \\ 
        &=z_u^{(l)}(\mathbf{x})\cdot\prod_{i\in \mathcal{R}_u^{(l)}}\mathbbm{1}(i \in S) -\sum_{L\subsetneq S}J_u^{(l)}(L)  
    \end{align*}

\textbf{(1) Proof of $J_u^{(l)}(\mathcal{R}_u^{(l)})=z_u^{(l)}(\mathbf{x})$}.

\textbf{Firstly}, let us use the inductive method to prove $\forall L\subsetneq \mathcal{R}_u^{(l)}, J_u^{(l)}(L)=0$.

If $|L| = 1, \forall L' \subseteq L \subsetneq \mathcal{R}_u^{(l)}$, we have $\prod_{i\in \mathcal{R}_u^{(l)}}\mathbbm{1}(i \in L')=0$, then we get $J^{(l)}_u(L') = z_u^{(l)}(\mathbf{x}) \cdot \prod_{i\in \mathcal{R}_u^{(l)}}\mathbbm{1}(i \in L') = 0$.

Assume that if $|L|=k$, $\forall L' \subseteq L \subsetneq \mathcal{R}_u^{(l)}$, we have $J^{(l)}_u(L')=0$.

Then if $|L| = k+1$, $\forall L' \subseteq L \subsetneq \mathcal{R}_u^{(l)}$, we have $\prod_{i\in \mathcal{R}_u^{(l)}}\mathbbm{1}(i \in L)=0$ and $\forall L'\subsetneq L, J_u^{(l)}(L')=0$. Thus, we get $J^{(l)}_u(L)=z_u^{(l)}(\mathbf{x})\cdot\prod_{i\in \mathcal{R}_u^{(l)}}\mathbbm{1}(i \in L) -\sum_{L'\subsetneq L}J_u^{(l)}(L')$ = 0.

In this way, we have proved that $\forall 1 \leq |L| < |\mathcal{R}_u^{(l)}|, \forall L\subsetneq \mathcal{R}_u^{(l)}, J_u^{(l)}(L)=0$.

\textbf{Secondly}, let us use the proven conclusion $\forall L\subsetneq \mathcal{R}_u^{(l)}, J_u^{(l)}(L)=0$ to derive $J_u^{(l)}(\mathcal{R}_u^{(l)})=z_u^{(l)}(\mathbf{x})$. 

Since $\prod_{i\in \mathcal{R}_u^{(l)}}\mathbbm{1}(i \in \mathcal{R}_u^{(l)})=1$, we get $J_u^{(l)}(\mathcal{R}_u^{(l)})= z_u^{(l)}(\mathbf{x})\cdot\prod_{i\in \mathcal{R}_u^{(l)}}\mathbbm{1}(i \in \mathcal{R}_u^{(l)}) -\sum_{L\subsetneq \mathcal{R}_u^{(l)}}J_u^{(l)}(L) = z_u^{(l)}(\mathbf{x})$.

In this way, we have proved that $J_u^{(l)}(\mathcal{R}_u^{(l)})=z_u^{(l)}(\mathbf{x})$. 

\textbf{(2) Proof of $\forall S\neq \mathcal{R}_u^{(l)}, J_u^{(l)}(S) =0$}.

To prove $\forall S\neq \mathcal{R}_u^{(l)}, J_u^{(l)}(S) =0$, we will prove that under the conditions of (1) $\forall S\subsetneq \mathcal{R}_u^{(l)}$, (2) $\forall S\supsetneq \mathcal{R}_u^{(l)}$, and (3) $\forall S, S \cup \mathcal{R}_u^{(l)}\neq S$ and $S \cup \mathcal{R}_u^{(l)}\neq \mathcal{R}_u^{(l)}$, we can get $J_u^{(l)}(S) =0$, respectively.

\textbf{Firstly}, we have proved that $\forall S\subsetneq \mathcal{R}_u^{(l)}, J_u^{(l)}(S)=0$.

\textbf{Secondly}, let us use the inductive method to prove $\forall S\supsetneq \mathcal{R}_u^{(l)}, J_u^{(l)}(S)=0$.

In this setting, $\prod_{i\in \mathcal{R}_u^{(l)}}\mathbbm{1}(i \in S)=1$. If $|S| = |\mathcal{R}_u^{(l)}|+1, \forall S\supsetneq \mathcal{R}_u^{(l)}$, we have $J^{(l)}_u(S) = z_u^{(l)}(\mathbf{x}) \cdot \prod_{i\in \mathcal{R}_u^{(l)}}\mathbbm{1}(i \in S) - \sum_{L\subsetneq S}J_u^{(l)}(L) = z_u^{(l)}(\mathbf{x}) - [J_u^{(l)}(\mathcal{R}_u^{(l)})+\sum_{L\subsetneq S, L\neq \mathcal{R}_u^{(l)}}J_u^{(l)}(L)]= 0$. (Similarly, $\forall L\subsetneq S$ and $ L\neq \mathcal{R}_u^{(l)}, J_u^{(l)}(L) = 0$ can be proved by the inductive method.)

Assume that if $|S| = |\mathcal{R}_u^{(l)}|+k, \forall S\supsetneq \mathcal{R}_u^{(l)}$, we have $J^{(l)}_u(S) = 0$.

Then if $|S| = |\mathcal{R}_u^{(l)}|+ (k+1), \forall S\supsetneq \mathcal{R}_u^{(l)}$, we have $J^{(l)}_u(S) = 0$.

\textbf{Thirdly}, let us use the inductive method to prove $\forall S, S \cup \mathcal{R}_u^{(l)}\neq S$ and $S \cup \mathcal{R}_u^{(l)}\neq \mathcal{R}_u^{(l)}, J_u^{(l)}(S)=0$.

In this setting, $\prod_{i\in \mathcal{R}_u^{(l)}}\mathbbm{1}(i \in S)=0$. Then we have $J^{(l)}_u(S) = z_u^{(l)}(\mathbf{x}) \cdot \prod_{i\in \mathcal{R}_u^{(l)}}\mathbbm{1}(i \in S) - \sum_{L\subsetneq S}J_u^{(l)}(L) = 0$. (Similarly, $\forall L \subsetneq S, J_u^{(l)}(S)=0$ can be proved by the inductive method.)

In this way, we have proved that $\forall S\neq \mathcal{R}_u^{(l)}, J_u^{(l)}(S) =0$.
    
\end{proof}
\section{Discussion on~\cref{eq:implementation1,eq:implementation2,eq:implementation3}}
\label{appendix: discussion}
Section~\ref{sec:construct} introduced that the neural activation $z_u^{(l)}(\mathbf{x})$ of the neuron $(l,u)$ in a Harsanyi block was computed by applying a linear operation (\cref{eq:implementation1}), an AND operation (\cref{eq:implementation2}), and a ReLU operation (\cref{eq:implementation3}). We provide further discussions on the above three operations as follows.

Unlike a linear layer in a traditional DNN,~\cref{eq:implementation1} shows that among neurons in all previous $(l-1)$ blocks, only outputs of the children nodes $\mathbf{\Sigma}_u^{(l)} \cdot \mathbbm{z}^{(l-1)}$ can affect the output of the neuron $(l, u)$.~\cref{eq:implementation2} denotes that if all children nodes in $\mathcal{S}_u^{(l)}$ are activated, then the activation score $g^{(l)}_u(\mathbf{x})$ can pass through the AND operation, \ie, $h_u^{(l)}(\mathbf{x})=g_u^{(l)}(\mathbf{x})$. Otherwise, if any children node is not activated, \ie,  $\exists (l', u')\in\mathcal{S}_u^{(l)}$, \eg, $z^{(l')}_{u'}(\mathbf{x})=0$, then we have $h_u^{(l)}(\mathbf{x})=0$.

\section{Proofs and implementation details for Harsanyi-CNN}
\label{appendix:Harsanyi-cnn}
\textbf{Harsanyi-CNN architecture.} Figure~\ref{Fig:childrenset} illustrates the architecture of the Harsanyi-CNN. As introduced in Section~\ref{sec:experimentsetting}, we first applied a convolutional layer, max-pooling layer and ReLU layer to obtain the feature $\mathbf{z}^{(0)}$. Then, we built cascaded Harsanyi blocks on $\mathbf{z}^{(0)}$. Similar to the traditional CNN, each neuron $(l, u=(c,w,h))$ in the convolutional layer of each HarsanyiBlock corresponds to a subtensor $\textbf{T}^{(l)}_u \in \mathbb{R}^{C\times K \times K}$ \wrt~the previous layer, where $C$ is the number of channels in the previous layer and $K\times K$ is the 2D kernel size. The neurons which share the same location but on different channels $(l, u=(:,w,h))$ correspond to the same subtensor $\textbf{T}^{(l)}_u$. The children set $\mathcal{S}_u^{(l)}$ of each neuron $(l,u)$ were selected from the subtensor $\textbf{T}^{(l)}_u$. Moreover, neurons on the same location but on different channels $(l', u'=(:,w',h'))$ belong to the children set $\mathcal{S}_u^{(l)}$ simultaneously. The output of the HarsanyiBlock was constructed following~\cref{eq:implementation1,eq:implementation2,eq:implementation3}. Finally, each dimension of the network output $v(\mathbf{x})$ is constructed as the weighted sum of the output of each HarsanyiBlock using linear transformations and the skip-connection. 
%As mentioned in the Section~\ref{sec:experiment}, we considered neurons in the same location on different channels as a single player. In other words, a single neuron could be considered to have all channels in an intermediate layer. In this way, we had to specify the number of cascaded Harsanyi blocks, the number of neurons per layer, and the number of channels per neuron. We use $H$, $W$ and $C$ to denote the height, width and number of channels of the features in an intermediate layer, respectively. For convenience, we kept the feature size of each intermediate layer fixed, \ie, we set the number of neurons per intermediate layer $H\times W$ and the number of channels per neuron $C$ to be the constant, respectively.

\begin{figure*}[h]
\vskip -0.1in
\centering
\includegraphics[width=0.8\linewidth,trim={0cm 0cm 4cm 1cm},clip]{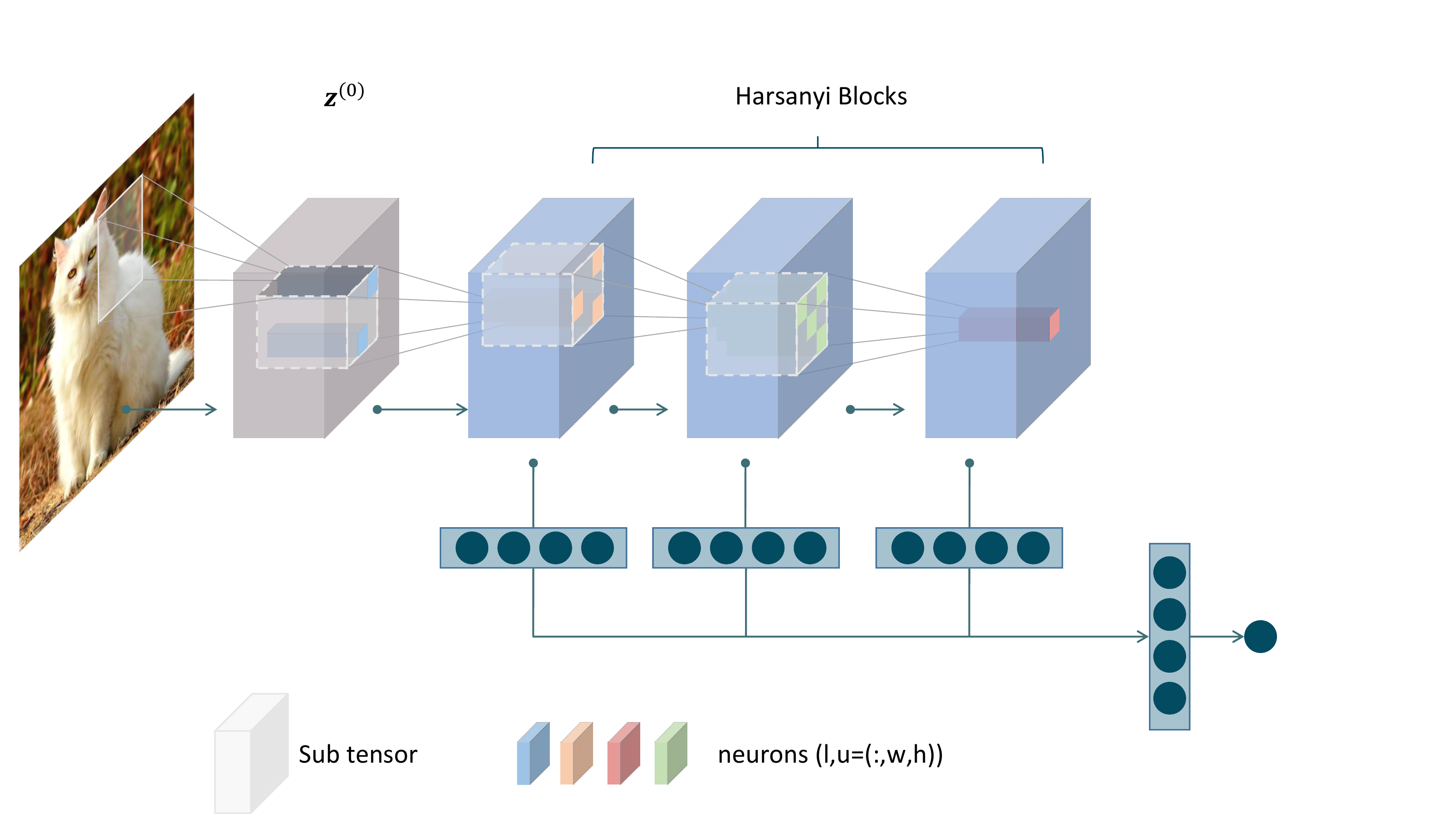}
\caption{Schematic diagram of the Harsanyi-CNN architecture.}
\label{Fig:childrenset}
\vskip -0.1in
\end{figure*}

\textbf{Proof of the conclusion in Setting 2 that}
\textit{based on the design of letting all neurons $(l,u=(1,h,w)),\dots,(l,u=(C,h,w))$ share the same parameter $\boldsymbol{\tau}_u^{(l)}$, all Harsanyi units $(l,u=(c,h,w))$ in the same location $(h,w)$ on different channels $(c=1,\dots,C)$ had the same receptive field $\mathcal{R}^{(l)}_{u=(:,h,w)}$ and contributed to the same Harsanyi interaction $I(S=\mathcal{R}^{(l)}_{u=(:,h,w)})$}.
\begin{proof}
    According to the implementation $\forall (l,u), \boldsymbol{\tau}_{u=(1,h,w)}^{(l)} = \boldsymbol{\tau}_{u=(2,h,w)}^{(l)} = \dots = \boldsymbol{\tau}_{u=(C,h,w)}^{(l)} \in \mathbb{R}^{CK^2}$, we will prove that $\forall (l,u), \mathcal{R}^{(l)}_{u=(1,h,w)} = \mathcal{R}^{(l)}_{u=(2,h,w)} = \dots = \mathcal{R}^{(l)}_{u=(C,h,w)}$.
    
    Since $\forall (l,u), (\mathbf{\Sigma}_u^{(l)})_{i,i}\!\!=\!\!\mathbbm{1}((\boldsymbol{\tau}_u^{(l)})_i>0)$, then for arbitrary binary diagonal matrix $\mathbf{\Sigma}_u^{(l)}$, we have $\forall (l,u), \mathbf{\Sigma}_{u=(1,h,w)}^{(l)}= \mathbf{\Sigma}_{u=(2,h,w)}^{(l)} = \dots = \mathbf{\Sigma}_{u=(C,h,w)}^{(l)}$. The children set $\mathcal{S}_u^{(l)}$ is implemented by  $\mathbf{\Sigma}_u^{(l)}$, then we have $\forall (l,u), \mathcal{S}_{u=(1,h,w)}^{(l)} = \mathcal{S}_{u=(2,h,w)}^{(l)} = \dots = \mathcal{S}_{u=(C,h,w)}^{(l)}$. According to Equation~(\ref{eq:rf}), we derive $\mathcal{R}_{u}^{(l)}$ from $\mathcal{S}_{u}^{(l)}$ recursively, then we have $\forall (l,u), \mathcal{R}_{u=(1,h,w)}^{(l)} = \mathcal{R}_{u=(2,h,w)}^{(l)} = \dots = \mathcal{R}_{u=(C,h,w)}^{(l)}$. In this way, the Harsanyi units $(l,u=(c,h,w))$ in the same location $(h,w)$ on different channels $(c=1,\dots,C)$ had the same receptive field $\mathcal{R}^{(l)}_{u=(:,h,w)}$.

    Next, we will show that considering $C$ channels as $C$ Harsanyi units, and considering $C$ channels together as a single Harsanyi unit, their Harsanyi interactions are equal in both cases.
    
    Considering $C$ channels as $C$ Harsanyi units, we have totally $m^{(l)} = H\times W\times C$ Harsanyi units in the $l$-th layer. We have $I(S=\mathcal{R}_{u=(1,h,w)}^{(l)}) = I(S=\mathcal{R}_{u=(2,h,w)}^{(l)}) = \dots = I(S=\mathcal{R}_{u=(C,h,w)}^{(l)})$, which is abbreviated to $I(S=\mathcal{R}_u^{(l)})$. According to Theorem~\ref{thm:linearity} and Lemma~\ref{thm:J(S)}, we have $$I(S=\mathcal{R}_u^{(l)}) = \sum\nolimits_{l=1}^{L}\sum\nolimits_{u=1}^{m^{(l)}}w_{v,u}^{(l)}J_u^{(l)}(S=\mathcal{R}_u^{(l)}) = \sum\nolimits_{l=1}^{L}\sum\nolimits_{u=1}^{H\times W\times C}w_{v,u}^{(l)}z_u^{(l)}(\mathbf{x})$$ 
    where $w_{v,u}^{(l)}\in \mathbb{R}$ and $z_u^{(l)}(\mathbf{x})\in \mathbb{R}$.  Based on~\cref{eq:implementation1,eq:implementation2,eq:implementation3},
    note that  $\forall c\in \{1,2,\dots,C\}, (l,u=(c,h,w))$ share the same children nodes $\mathcal{S}_{u=(1,h,w)}^{(l)} = \mathcal{S}_{u=(2,h,w)}^{(l)} = \dots = \mathcal{S}_{u=(C,h,w)}^{(l)}$, then $h_{u=(c,h,w)}^{(l)}(\mathbf{x})$ at the same location on different channels is activated or deactivated at the same time, due to the AND operation on the child nodes.
    Besides, $z_u^{(l)}(\mathbf{x})$ is determined by the linear combination of the child nodes $g_u^{(l)}(\mathbf{x}) = (\mathbf{A}^{(l)}_u)^\intercal\!\!\cdot\!\!\left(\mathbf{\Sigma}_u^{(l)} \!\cdot \!\mathbbm{z}^{(l-1)}\right)$, where $\mathbf{A}^{(l)}_u\in \mathbb{R}^{CK^2}$ is the parameter of a convolution kernel (A total of $C$ convolution kernels, denoted as $\mathbf{B}^{(l)}_u\in \mathbb{R}^{(CK^2)\times C}$, can derive $C$ harsanyi units at the same position on different channels), $ \mathbf{\Sigma}_{u}^{(l)}\in \mathbb{R}^{(CK^2)\times (CK^2)}$ denotes the selected children nodes and $\mathbbm{z}^{(l-1)}\in \mathbb{R}^{CK^2}$ denotes the feature maps of the $(l-1)$-th layer within the coverage of the convolution kernel. 
    
    Considering $C$ channels together as a single Harsanyi unit, we have totally $m^{(l)} = H\times W$ Harsanyi units in the $l$-th layer. We use $I(S=\mathcal{R}_{u=(:,h,w)}^{(l)})$ to denote this case. According to Theorem~\ref{thm:linearity} and Lemma~\ref{thm:J(S)}, we have 
    $$I(S=\mathcal{R}_{u=(:,h,w)}^{(l)}) = \sum\nolimits_{l=1}^{L}\sum\nolimits_{u=1}^{m^{(l)}}w_{v,u}^{(l)}J_u^{(l)}(S=\mathcal{R}_u^{(l)}) = \sum\nolimits_{l=1}^{L}\sum\nolimits_{u=1}^{H\times W}
    (\mathbf{w}_{v,u}^{(l)})^\intercal \mathbf{z}_u^{(l)}(\mathbf{x})$$ 
    where $\mathbf{w}_{v,u}^{(l)}\in \mathbb{R}^C$ and $\mathbf{z}_u^{(l)}(\mathbf{x})\in \mathbb{R}^C$. Based on~\cref{eq:implementation1,eq:implementation2,eq:implementation3}, note that $\forall c\in \{1,2,\dots,C\}, \mathcal{S}_{u=(:,h,w)}^{(l)} = \mathcal{S}_{u=(c,h,w)}^{(l)}$, then the single $C$-dimensional Harsanyi unit has the same activation state as $C$ Hassani units in above case. Besides, $\mathbf{z}_u^{(l)}(\mathbf{x})$ is determined by the linear combination of the child nodes $\mathbf{g}_u^{(l)}(\mathbf{x}) = (\mathbf{B}^{(l)}_u)^\intercal\cdot\left(\mathbf{\Sigma}_u^{(l)} \!\cdot \!\mathbbm{z}^{(l-1)}\right) \in  \mathbb{R}^C$, where $\mathbf{B}^{(l)}_u\in \mathbb{R}^{(CK^2)\times C}$ is the parameters of a total of $C$ convolution kernels, $\mathbf{\Sigma}_{u}^{(l)}\in \mathbb{R}^{(CK^2)\times (CK^2)}$ denotes the selected children nodes and $\mathbbm{z}^{(l-1)}\in \mathbb{R}^{CK^2}$ denotes the feature maps of the $(l-1)$-th layer within the coverage of the convolution kernel.

    In this way, we proved that the Harsanyi units $(l,u=(c,h,w))$ in the same location $(h,w)$ on different channels $(c=1,\dots,C)$ had the same receptive field $\mathcal{R}^{(l)}_{u=(:,h,w)}$ and contributed to the same Harsanyi interaction $I(S=\mathcal{R}^{(l)}_{u=(:,h,w)})$.
    
\end{proof}

\section{More experiment results and details}

\subsection{Experiment results of more challenging datasets on the HarsanyiNets}
To further explore the classification performance of the HarsanyiNets, we conducted experiments on more challenging datasets, including the Oxford Flowers-102~\citep{maria-elena2008automated} and COVIDx dataset~\citep{wang2020COVID-Net}. To compare the classification accuracy of the HarsanyiNet with a traditional DNN, we used ResNet-50~\citep{he2016deep} and COVID-Net CXR-2~\citep{pavlova2022covid} as baseline models and reported the results in Table~\ref{tab:imageperformance}. Specifically, we used the intermediate-layer features with the size of $512\times 14 \times 14$ from the pre-trained VGG-16 model~\citep{simonyan2015very} as $\mathbf{z}^{(0)}$, and then trained the HarsanyiNet upon $\mathbf{z}^{(0)}$ with the same hyperparameters as described in Section~\ref{sec:experiment}.

\begin{table}[t]
\caption{Classification accuracy (\%) of the HarsanyiNet and baseline models on more challenging datasets}
\label{tab:imageperformance}
\begin{center}
\begin{small}
\begin{tabular}{lccc}
\toprule
Dataset &  HarsanyiNet  &  baseline models  \\
\midrule
Oxford Flowers-102  & 95.48  & 97.9 (ResNet50~\citep{wightman2021resnet})   \\
COVIDx & 96.75 & 96.3 (COVID-Net CXR-2~\citep{pavlova2022covid}) \\
\bottomrule
\end{tabular}
\end{small}
\end{center}
\end{table}

As shown in Table~\ref{tab:imageperformance}, the classification accuracy of the HarsanyiNet on the Oxford Flowers-102 is slightly lower than ResNet-50. However, on medical dataset COVIDx, the classification accuracy of the HarsanyiNet is slightly higher than COVID-Net CXR-2. Despite this relatively small sacrifice in classification accuracy on certain datasets, the HarsanyiNet computed the exact Shapley values in a single forward propogation, which was its main advantage over other neural networks.

\subsection{Experiment results for verifying the accuracy of the Shapley values on the HarsanyiNets}

To further verify the accuracy of the Shapley values on high-dimensional image datasets, we compared the Shapley values calculated by HarsanyiNet with those estimated by the sampling method. Specifically, we ran the sampling algorithm with 5000 and 10000 iterations on the MNIST dataset and the CIFAR-10 dataset, respectively. Table~\ref{tab:imageaccuracy} shows the root mean square error (RMSE) between the Shapley values calculated by HarsanyiNet and the Shapley values estimated by the sampling algorithm. The estimation errors between both methods are quite small. Nevertheless, we need to emphasize that the sampling algorithm was more accurate when the sampling number was large, there was still a non-negligible error between the the estimated Shapely values and the ground-truth Shapley values.

\begin{table}[t]
\vskip -0.1in
\caption{Error between the Shapley values computed by the HarsanyiNet and the Shapley values estimated by the sampling method}
\label{tab:imageaccuracy}
\begin{center}
\begin{small}
\begin{tabular}{lccc}
\toprule
Dataset & Errors of Shapley values (5000 iterations)  &  Errors of Shapley values (10000 iterations)   \\
\midrule
MNIST  & 0.017  & 0.012   \\
CIFAR-10 & 0.007 & 0.004 \\
\bottomrule
\end{tabular}
\end{small}
\end{center}
\vskip -0.1in
\end{table}

\subsection{Experiment results of the training cost of the HarsanyiNets}
To further explore the training cost of the HarsanyiNets, we conducted experiments on the Census, MNIST, and CIFAR-10 datasets to evaluate the training cost of HarsanyiNets and traditional DNNs with comparable sizes. Table~\ref{tab:trainingcost} shows that the computational cost of training the HarsanyiNet is higher than training a comparable DNN, and the computational cost of the HarsanyiNet is about twice the cost of a traditional DNN with the same number of parameters.

\begin{table}[t]
\caption{Training cost per epoch (s) of the HarsanyiNet and the comparable DNN}
\label{tab:trainingcost}
\begin{center}
\begin{small}
\begin{tabular}{lccc}
\toprule
Dataset &  HarsanyiNet  &  Comparable DNN  \\
\midrule
Census  & 5.0  & 1.9   \\
MNIST & 243.3 & 127.0      \\
CIFAR-10 & 205.0 & 106.7   \\
\bottomrule
\end{tabular}
\end{small}
\end{center}
\end{table}

\subsection{Experiment results of the robustness of the HarsanyiNets}
We conducted more experiments to analyze the robustness of the HarsanyiNet. Specifically, we estimate the adversarial robustness of the classification performance and the adversarial robustness of the estimated Shapley values~\citep{jia2019towards}.

To estimate the adversarial robustness of the classification performance on HarsanyiNet, we conducted experiments on the CIFAR-10 dataset to evaluate the model robustness by examining its classification accuracy on the test set of adversarial examples. To generate adversarial examples, we used the FGSM attack~\citep{goodfellow2015explaining}, a gradient-based method, with a maximum perturbation of 8/255~\citep{madry2018towards}.
Table~\ref{tab:robustness} shows that the classification accuracy of the adversarial examples of HarsanyiNet is slightly higher than that of ResNet-18~\citep{he2016deep}.

To estimate the adversarial robustness of the estimated Shapley values on HarsanyiNet, we assessed the robustness of its estimated Shapley values by computing the $\ell_2$ norm of the difference in Shapley values between the adversarial and natural examples, \ie, $||\boldsymbol{\phi}^{nat} - \boldsymbol{\phi}^{adv}||_{\ell_2}$, where $\boldsymbol{\phi}^{nat}$ denotes the Shapley values of natural examples, and $\boldsymbol{\phi}^{adv}$ denotes the Shapley values of adversarial examples. To calculate the Shapley values of the ResNet-18 model, we estimate Shapley values using the sampling algorithm with 1000 iterations. Table~\ref{tab:robustness} shows that the adversarial robustness of the estimated Shapley values on HarsanyiNet (estimated by the $\ell_2$ norm of the difference of Shapley values, the lower the better) is slightly higher than that of ResNet-18.

Both experiments indicate that HarsanyiNet has a robustness close to, or even slightly higher than, that of the traditional model.

\begin{table}[t]
\vskip -0.2in
\caption{Model robustness and Shapley value robustness}
\label{tab:robustness}
\begin{center}
\begin{small}
\begin{tabular}{lccc}
\toprule
Model & Classification accuracy of adversarial examples (\%)  &  $\ell_2$ norm of the Shapley value difference   \\
\midrule
HarsanyiNet  & 13.83\%  & 1.44   \\
ResNet-18 & 8.21\% & 1.50 \\
\bottomrule
\end{tabular}
\end{small}
\end{center}
\vskip -0.1in
\end{table}

\subsection{More results of the estimated Shapley values on the HarsanyiNets}
We conducted more experiments to show the explanations produced by our HarsanyiNets. Specifically, we trained the Harsanyi-MLP on tabular datasets and the Harsanyi-CNN on image datasets.

For the tabular datasets including the Census, Yeast and TV news datasets, we compared the estimated Shapley values for each method in Figure~\ref{Fig:census_example}, Figure~\ref{Fig:yeast_example}, and Figure~\ref{Fig:commercial_example}, respectively. It can be seen that the Shapley values calculated by our HarsanyiNet were exactly the same as the ground-truth Shapley values calculated by Definition~\ref{def:shapley}, while the approximation methods, including the sampling method~\citep{castro2009polynomial}, antithetical sampling~\citep{mitchell2022sampling}, KernelSHAP~\citep{lundberg2017unified}, and KernelSHAP with paired sampling (KernelSHAP-PS)~\citep{covert2021improving}, needed thousands of network inferences to compute the relatively accurate Shapley values. 

For the image datasets including the MNIST and CIFAR-10 datasets, we generated more attribution maps on different categories in Figure~\ref{Fig:mnist_classes} and Figure~\ref{Fig:cifar_classes}, respectively.

\begin{figure}[t!]
\centering
\includegraphics[width=0.9\linewidth,trim={1cm 2.5cm 3cm 0cm}]{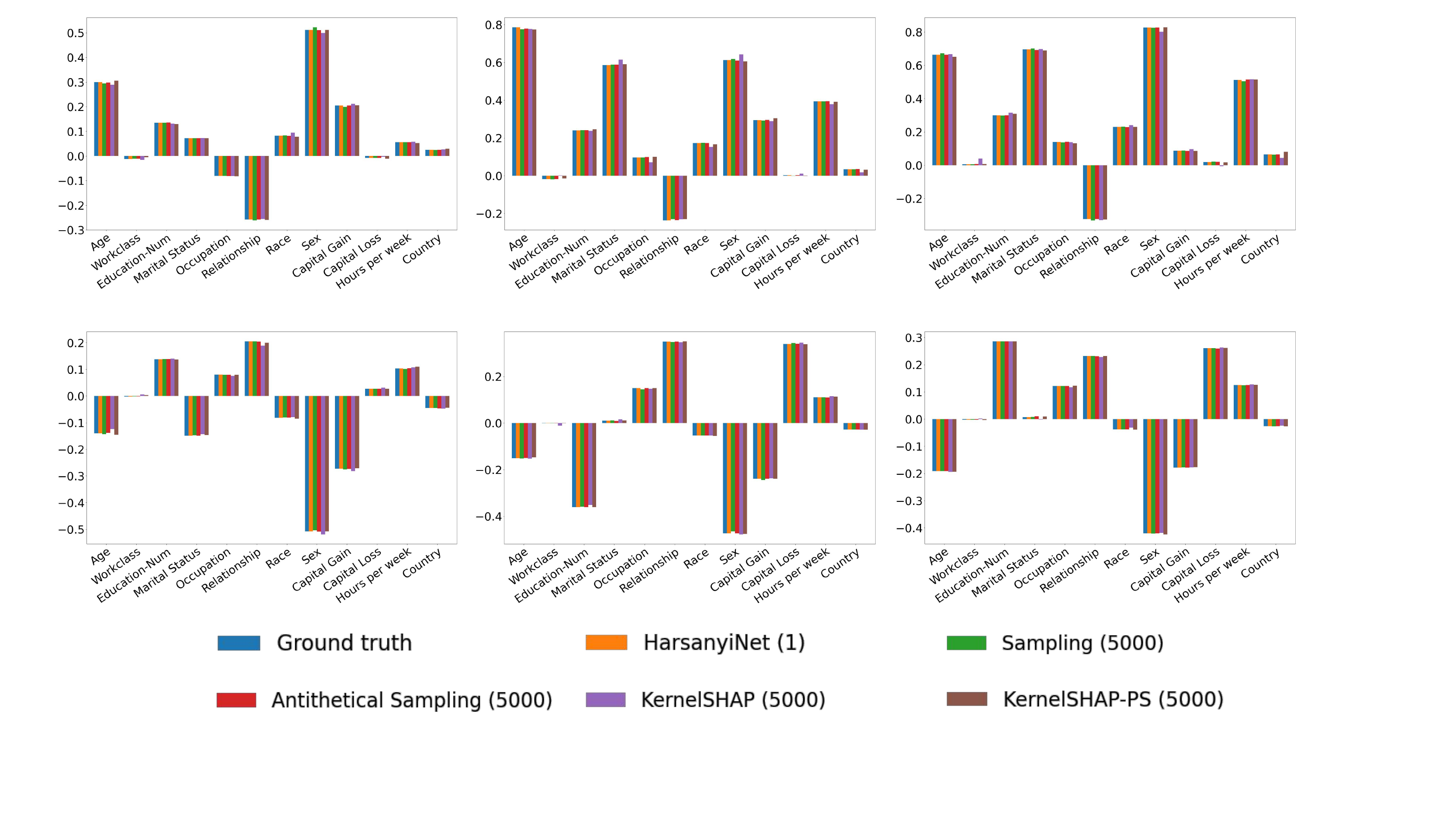}
\caption{Shapley values computed by different methods on the Census dataset. The number of inferences conducted for each method is indicated in the brackets. The samples in the first row are from category `$\leq$50K' and the samples in the second row are from the category `$>$50K'.}
\label{Fig:census_example}
\end{figure}

\begin{figure}[t!]
\centering
\includegraphics[width=1.0\linewidth,trim={0cm 0cm 0cm 0cm}]{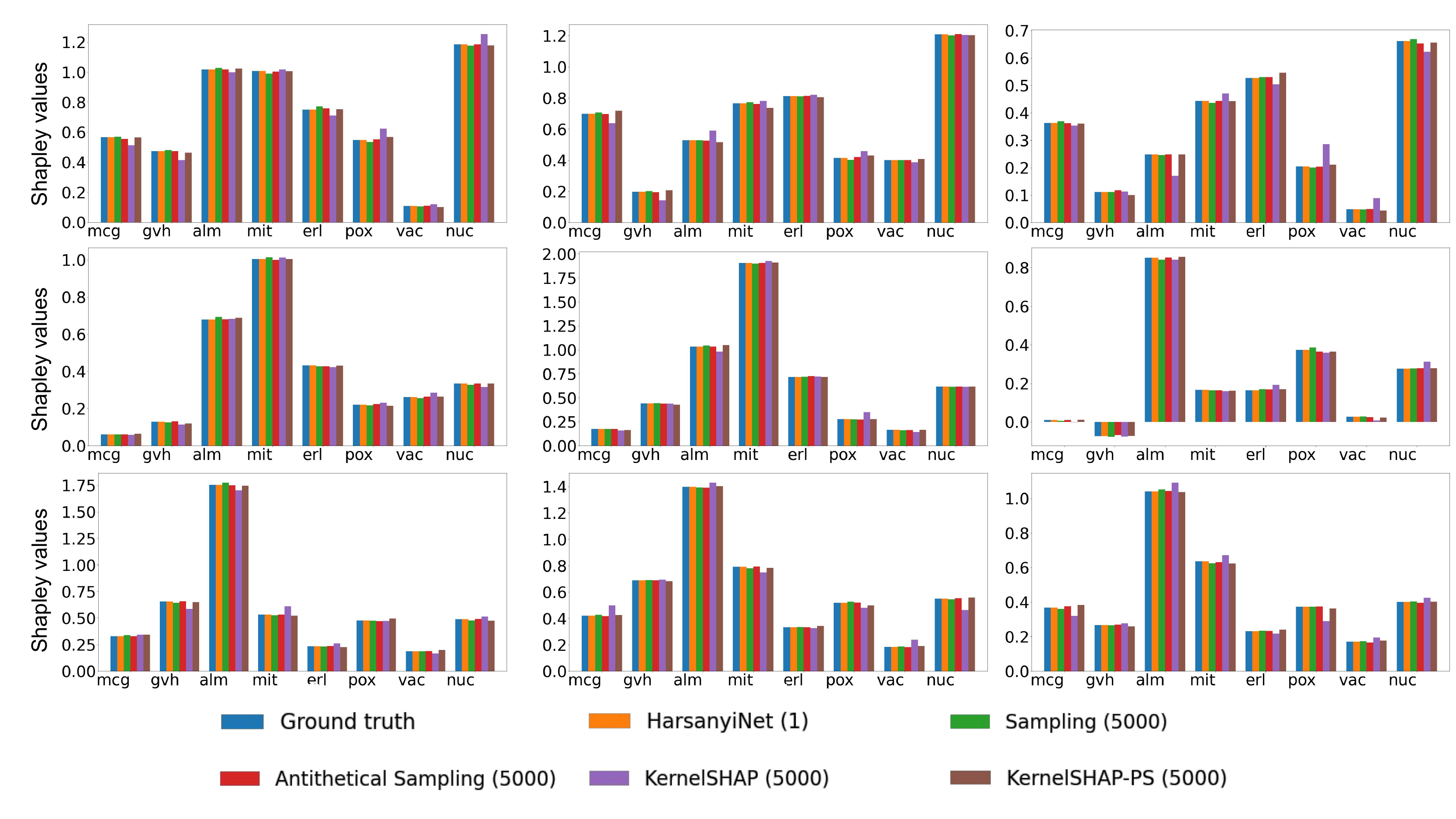}
\caption{Shapley values computed by different methods on the Yeast dataset. The number of inferences conducted for each method is indicated in the brackets. The Shapley values calculated on samples from 3 categories (out of 10 categories) are shown. Samples in the first row are from category `CYT', samples in the second row are from category `MIT', and samples in the last row are from category `NUC'.}
\label{Fig:yeast_example}
\end{figure}

\begin{figure}[t!]
\centering
\includegraphics[width=1.0\linewidth,trim={0cm 2cm 1cm 0cm}]{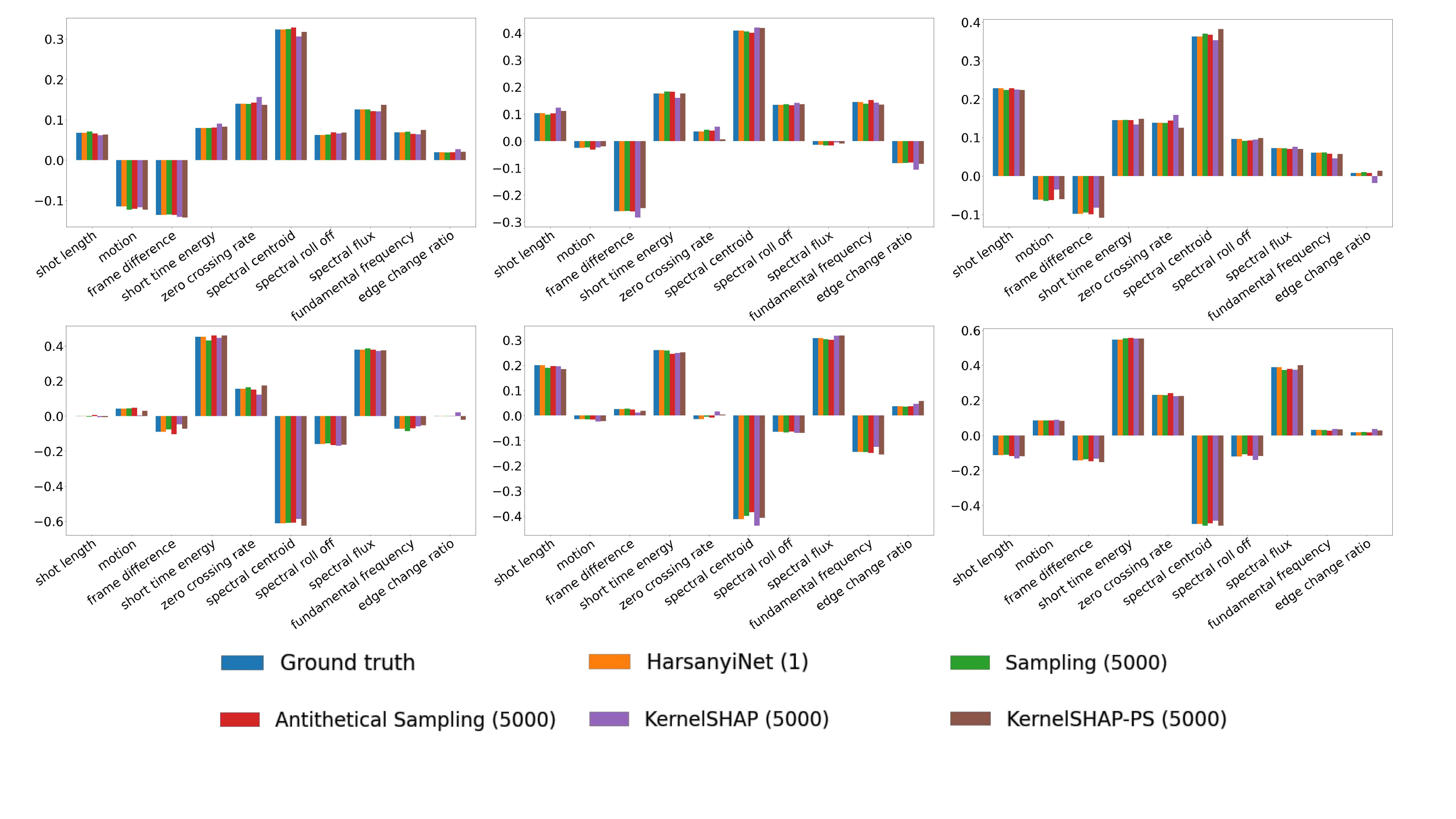}
\caption{Shapley values computed by different methods on the TV News dataset. The number of inferences conducted for each method is indicated in the brackets. The samples in the first row are from category `Non Commercials' and the samples in the second row are from the category `Commercials'.}
\label{Fig:commercial_example}
\end{figure}

\subsection{Experiment details for computing interaction strength of Harsanyi interactions encoded by a DNN}
\label{appendix:aog}
When the number of input variables is small (\eg, $n<16$), we can iteratively calculate the interaction strength of all Harsanyi interactions following Definition~\ref{def:harsanyi}. For tabular datasets, including the Census, Yeast and TV news datasets, we set the baseline value of each input variable the mean value of this variable over all training samples. Then we computed each Harsanyi interaction's strength in a brute-force manner. 

For the MNIST dataset, it is impractical to directly compute the Harsanyi strength of all Harsanyi interactions. To reduce the computational cost, we randomly sampled 8 image regions in the foreground of each image following the previous work~\cite{ren2023defining}. Then we were able to compute the Harsanyi effect of all possible Harsanyi interactions among the sampled 8 image regions (In total we obtained $2^8 = 256$ different Harsanyi interactions). In terms of the baseline value, we set, for each pixel, the baseline value to zero.

\subsection{Experiment details for generating attribution maps in~\cref{Fig:cifar}}
\label{appendix:attrimap}
We compare the attribution maps of each method on the Harsanyi-CNN models. To facilitate comparison with other methods, the Harsanyi-CNN for the MNIST dataset was constructed with 4 cascaded Harsanyi blocks, and each Harsanyi block had $32 \times 14 \times 14$ neurons, where $32$ is the number of channels. The hyperparameters were set to $\beta = 100$ and $\gamma = 0.05$ respectively. The Harsanyi-CNN for the CIFAR-10 dataset was constructed with 10 cascaded Harsanyi blocks, and each Harsanyi block had $256 \times 16 \times 16$ neurons, where $256$ is the number of channels. The hyperparameters were set to $\beta = 1000$ and $\gamma = 1$ respectively. In this way, the HarsanyiNet and the other four model-agnostic methods used the same model to ensure the fairness of the comparison.

Besides, since the Harsanyi-CNN model calculated Shapley values on the feature $\mathbf{z}^{(0)}$, we also calculated Shapley values on $\mathbf{z}^{(0)}$ using the sampling, KernelSHAP, and DeepSHAP methods. For the MNIST dataset, we run about 20000 iterations of the sampling method and 20000 iterations of the KernelSHAP method until convergence. For the CIFAR-10 dataset, we run about 20000 iterations of the sampling method and 200000 iterations of the KernelSHAP method until convergence.

For the FastSHAP method, we used the training samples $\mathbf{x}$ and the model predictions of the Harsanyi-CNN to train a explainer model $\phi_{fast}$, and slightly modified the model architecture to return the attribution maps with a tensor of the same size as the size of $\mathbf{z}^{(0)}$, \ie, $14 \times 14$ for the MNIST dataset and $16 \times 16$ for the CIFAR-10 dataset. For the MNIST dataset, since ~\citep{jethani2021fastshap} did not report the explainer model $\phi_{fast}$, we trained a explainer model with the same structure as which the CIFAR-10 dataset used, and computed the attribution maps on the MNIST dataset.

\subsection{Experiment details for comparing computed Shapley values with true Shapley values on the MNIST and CIFAR-10 datasets}
\label{appendix:experiment12players}
As mentioned in Section~\ref{sec:experimentandcomparison}, in order to reduce the computational cost, we randomly sampled $n=12$ variables in the foreground of the sample as input variables on image datasets. In this way, ground-truth Shapley values were computed by masking the selected 12 variables and keeping all the other variables as the original variables of the sample. Let us denote the set of the selected variables as $\hat{N}$, thus, $|\hat{N}| = 12$. Specifically, we set all the variables $x_i, i\notin \hat{N}$ as the baseline value, \ie, $\forall i\notin \hat{N}, b_i=x_i$ and $\forall i\in \hat{N}, b_i=0$, to obtain a baseline sample $v(\mathbf{x}_{\emptyset})$. Based on the baseline sample $v(\mathbf{x}_{\emptyset})$, we obtained $2^{|\hat{N}|}$ different masked samples. For the HarsanyiNet, when we computed the Shapley values for the selected variables based on Theorem~\ref{theorem1}, we only visited the sets that contain the selected variables, \ie, $S \ni i, \forall i\in \hat{N}$. Besides, $|S|$ denoted the number of the selected variables in $S$. In this way, we computed the Shapley values for the selected variables by $\phi(i) = \sum\nolimits_{S\subseteq N: S \ni i, i\in \hat{N}} \frac{1}{|S|}I(S)$ and $\sum_{i=1}^n \phi(i) = v(\mathbf{x}_N=\mathbf{x}) - v(\mathbf{x}_{\emptyset})$. For the ShapNets, we set all the variables $x_i, i\notin \hat{N}$ as the baseline value $b_i=0$ to obtain a masked sample $\mathbf{x}'$, \ie, $x'_i=x_i, \forall i \in \hat{N}$; $x'_i=0$, otherwise. Then, with the masked input sample, we could compute the Shapley values for the selected variables with the ShapNet. 

\begin{figure}[t!]
\vskip -1.5in
\centering
\includegraphics[width=0.6\linewidth]{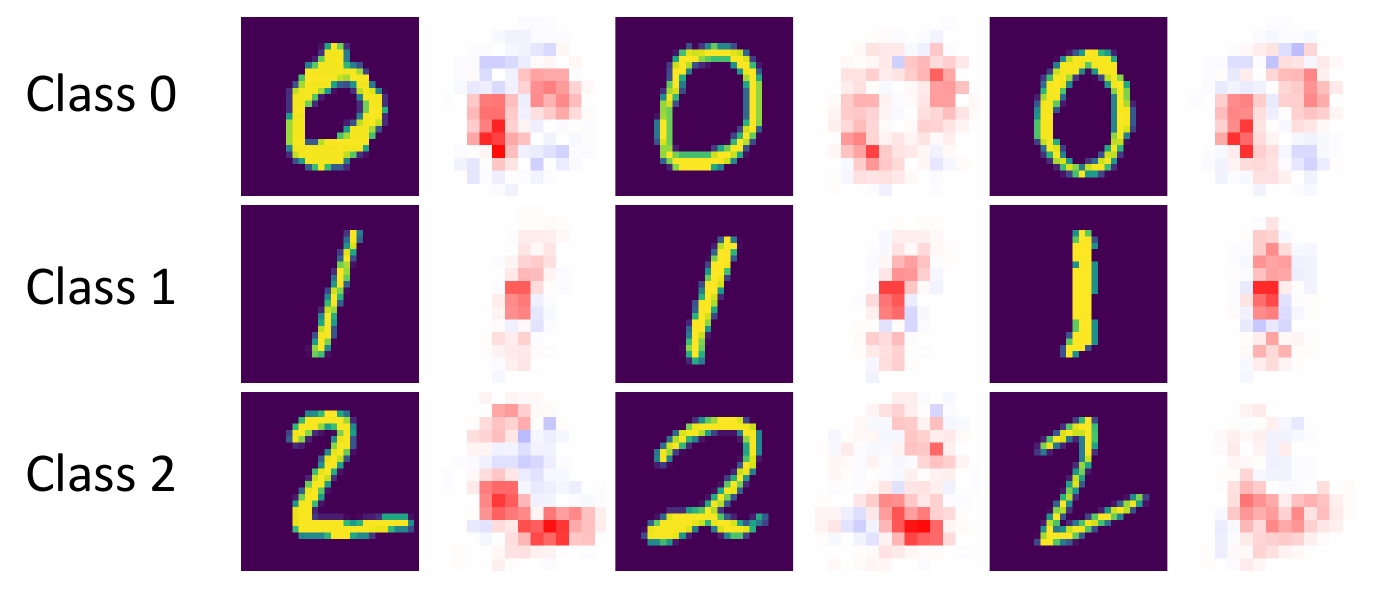}

\caption{Shapley values produced by the HarsanyiNet on the MNIST dataset. The Shapley value is computed by setting $v(\mathbf{x}_S)$ as the output dimension of the ground-truth category of the input sample $\mathbf{x}$.}
\label{Fig:mnist_classes}
\end{figure}

\begin{figure}[t!]
\centering

\includegraphics[width=1.0\linewidth]{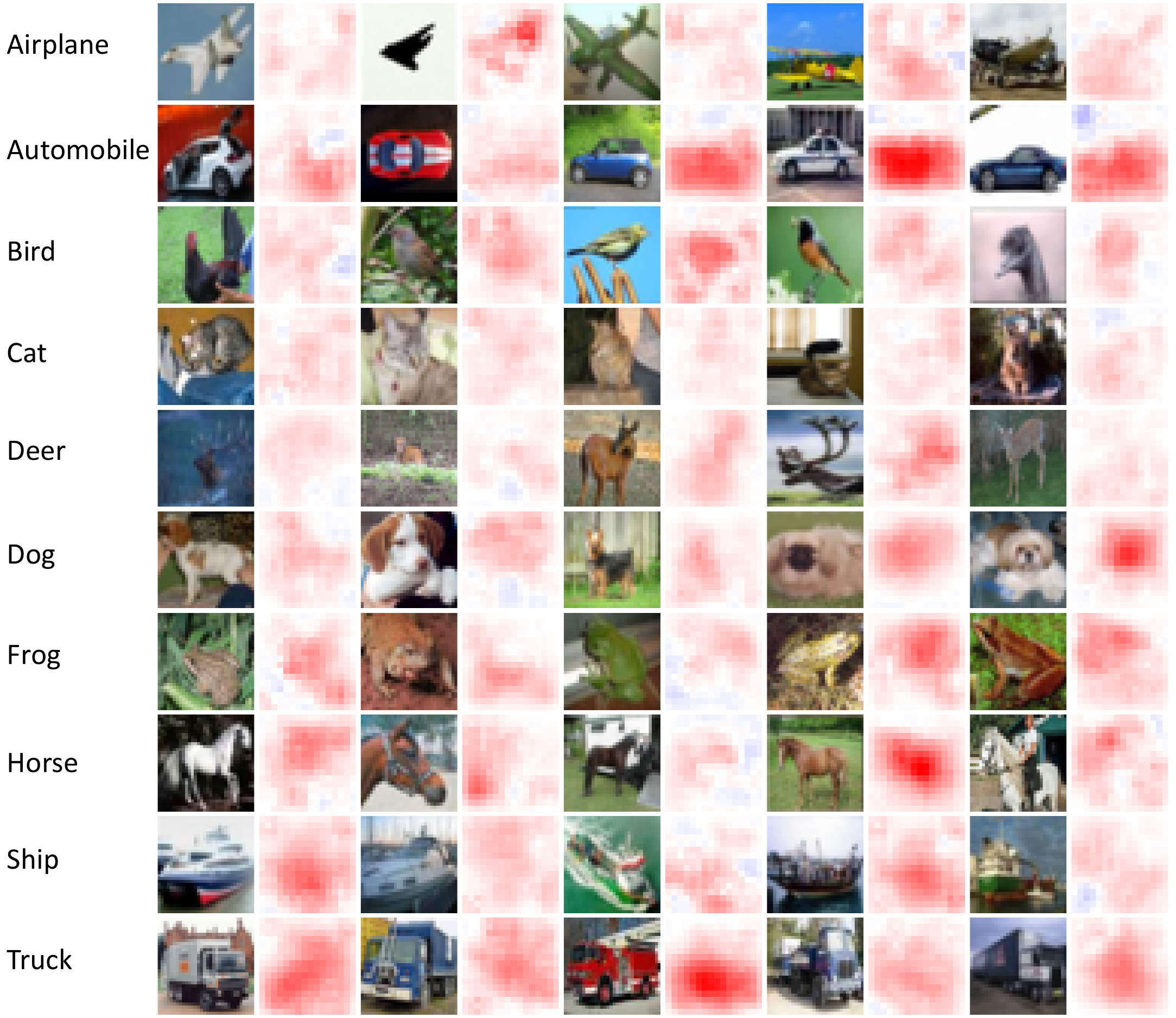}

\caption{Shapley values produced by the HarsanyiNet on the CIFAR-10 dataset. The Shapley value is computed by setting $v(\mathbf{x}_S)$ as the output dimension of the ground-truth category of the input sample $\mathbf{x}$.}
\label{Fig:cifar_classes}
\vskip -0.1in
\end{figure}

\end{document}